\documentclass{article}

\usepackage{arxiv}

\usepackage[utf8]{inputenc} 
\usepackage[T1]{fontenc}    
\usepackage{hyperref}       
\usepackage{url}            
\usepackage{booktabs}       
\usepackage{amsfonts}       
\usepackage{nicefrac}       
\usepackage{microtype}      
\usepackage{lipsum}         
\usepackage{graphicx}
\usepackage{natbib}
\usepackage{doi}

\usepackage{hyperref}
\usepackage{url}
\usepackage{graphicx,subcaption}   
\usepackage{adjustbox}
\usepackage{multirow}
\usepackage{booktabs} 
\usepackage{makecell}
\usepackage{amsmath,amssymb,amsfonts}
\usepackage{amsthm}
\usepackage{cleveref}       
\newtheorem{assumption}{Assumption}

\theoremstyle{plain}

\newtheorem{proposition}{Proposition}   %

\theoremstyle{definition}
\theoremstyle{remark}

\usepackage{xcolor}
\usepackage{booktabs}
\usepackage{listings}
\usepackage[most]{tcolorbox}
\tcbuselibrary{skins,breakable}

\title{Self-Evolving Algorithm-Design Agents: Escaping In-Context Evolutionary Stagnation via Population-Curated Policy Optimization}

\date{}

\author{ 
Chen Lu$^{1,2}$, Ke Xue$^{1,2}$, Siyuan Xu$^{3}$, Mingxuan Yuan$^{3}$, Chao Qian$^{1,2}$
\thanks{Corresponding Author: qianc@nju.edu.cn} \\
  $^1$ State Key Laboratory of Novel Software Technology, Nanjing University, China\\
  $^2$ School of Artificial
Intelligence, Nanjing University, China\\
  $^3$ Huawei Noah’s Ark Lab, China\\
}

\renewcommand{\shorttitle}{\textit{arXiv Preprint}}

\hypersetup{
pdftitle={A template for the arxiv style},
pdfsubject={q-bio.NC, q-bio.QM},
pdfauthor={David S.~Hippocampus, Elias D.~Striatum},
pdfkeywords={First keyword, Second keyword, More},
}

\begin{document}
\maketitle

\begin{abstract}
	Large language models are increasingly participating in complex real-world tasks in the form of algorithm-design agents, designing and refining algorithms. Many successful algorithm-design agents adopt pure in-context evolutionary frameworks, but they may quickly plateau in domains that require specialized knowledge. Parametric adaptation offers a way to internalize specialized knowledge, but conventional training requires abundant domain-specific corpora while high-quality algorithms are scarce in complex algorithm-design scenarios. In this paper, we propose sample-efficient parametric self-evolution where agents can explore and learn from self-generated algorithms. First, we characterize in-context evolutionary stagnation and analytically propose the Improvement Chain proposition, showing how learning successive self-generated algorithms can locally increase the likelihood of neighboring algorithms. Motivated by this local-transfer perspective, we further propose Population-Curated Policy Optimization (PCPO) to utilize a global population and a hybrid policy update scheme for retaining and reusing high-quality, diverse self-generated algorithms, shifting the policy towards stronger algorithms. In the task of learning rate schedule design for global placement in electronic design automation, trained only on 4 chip cases, PCPO outperforms the state-of-the-art in-context evolutionary methods (e.g., OpenEvolve and ShinkaEvolve) on average across 16 chip cases. With an 8B-size base model, PCPO achieves competitive performance compared to frontier closed-source models such as GPT-5.5. PCPO also reduces inference-time token cost by internalizing grounded domain knowledge and prompt distillation. Moreover, PCPO achieves significant speedups on four GPU kernel designs, with an average of 8.27$\times$ speedup against the PyTorch Eager baseline. 
\end{abstract}

\keywords{ Self-evolving Agent \and Evolutionary Learning \and Reinforcement Learning \and Automated Algorithm Design}

\section{Introduction}
As the capabilities of large language models (LLMs) continue to grow, LLMs are increasingly participating in complex real-world tasks in the form of agents~\citep{wang2024agentsurvey,xi2025agentsurvey}. The self-evolving algorithm-design agent~\citep{whitepaper} is one promising form for completing complex real-world tasks, where the agent solves them from a high-level decision-making perspective by iteratively designing and refining algorithms, achieving outstanding results in scenarios like classical combinatorial optimization problems and open mathematical discoveries with pure in-context evolutionary frameworks~\citep{EoH,alphaevolve,lange2026shinkaevolve}. 

\begin{figure*}[t]
\centering
\centering
\includegraphics[width=0.85\textwidth]{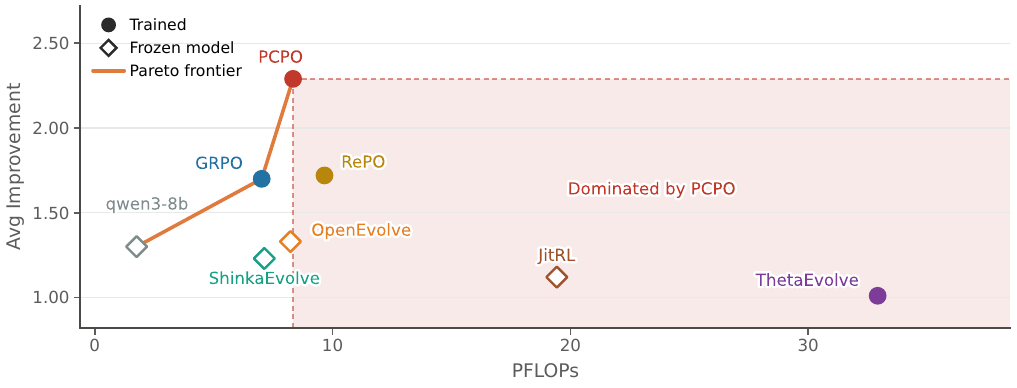} 
\caption{Average online improvement on 4 chips against total training and inference cost. PCPO achieves the largest improvement within acceptable computational cost, and the trained model can generalize to other chip cases with minimal inference-time token cost as shown in Section~\ref{exp-gp}. Frozen-model methods underperform the base model due to the stagnation described in Section~\ref{sec:in-context stagnation} and incur substantial computational cost due to repeated generation refinement and long thinking.}
\label{fig:pflops_vs_improvement}
\vspace{-12pt}
\end{figure*}

However, in complex scenarios that require specialized knowledge, pure in-context evolution tends to quickly plateau~\citep{evo-plateaus,lu2026advancing,lange2026shinkaevolve}, whose diagnostic experiments are shown in Figure~\ref{fig:diagnoses-qwen3}, posing obstacles to constructing effective self-evolving algorithm-design agents. A natural way to cope with specialized knowledge is to embed abundant domain-specific corpora in the model weights with continual pre-training (CPT) or supervised fine-tuning (SFT)~\citep{Domain-Knowledge-Injection,guo-etal-2025-efficient-domain, Domain-adaptive-Post-training}. However, algorithm-design scenarios such as global placement~\citep{chu2009placement} in electronic design automation (EDA) and GPU kernel optimization~\citep{kernelbench} offer insufficient labeled, high-quality algorithms. Therefore, self-evolving algorithm-design agents that automatically explore, enhance, and parametrically learn from self-generated algorithms demonstrate great significance. Recently, some works have made attempts to enhance evolution with reinforcement learning (RL) at inference time~\citep{wang2026thetaevolve, ttt-discover}, but they still face the typical sample-efficiency challenge of RL and do not make full use of the guidance from high-quality and diverse self-generated algorithms, leaving the potential benefits of population-level experience reuse across policy updates underexplored.

Therefore, in this work, we develop self-evolving algorithm-design agents that evolve via exploring and training on self-generated algorithms. We first characterize in-context evolutionary stagnation through diagnostic experiments. The observed sensitivity to domain information and the repeated low-quality generations motivate the hypothesis that hallucinated self-generated context can reinforce a cycle of biased outputs and local stagnation. Then we introduce the Improvement Chain as a motivating abstraction for studying local probability transfer between neighboring algorithms, with a conditional proposition that provides a conceptual account of how learning from curated intermediate algorithms may locally shift the task-conditional distribution toward neighboring algorithms. Motivated by this local-transfer perspective, we further propose Population-Curated Policy Optimization (PCPO) to utilize a global population for retaining and reusing high-quality, diverse self-generated algorithms as promising candidates. PCPO also adopts a hybrid policy update scheme, where refreshing on-policy updates enhance exploration and fast correction, while regularized off-policy updates learn elite self-generated algorithms, shifting the policy towards stronger neighbors.

We evaluate PCPO on the task of global placement (a critical procedure of the chip physical design process, deciding an overall layout of cells on the chip canvas~\citep{markov2012progress}), designing learning rate schedules for a GPU-accelerated placer, DREAMPlace~\citep{lin2020dreamplace}, which have been empirically shown to have a great impact on the final performance and convergence~\citep{agnesina2023autodmp}. Figure~\ref{fig:pflops_vs_improvement} shows the online performance comparisons, and PCPO achieves the largest improvement within acceptable computational cost in this online setting. Moreover, trained on only 4 chip cases, PCPO generalizes to other 12 chip cases, outperforming the state-of-the-art in-context evolutionary methods (e.g., OpenEvolve and ShinkaEvolve~\citep{lange2026shinkaevolve}) on average across 16 chip cases, and shows better sample efficiency than Group Relative Policy Optimization (GRPO)~\citep{deepseek-r1}. Based on an 8B-size model, PCPO achieves competitive performance compared to frontier closed-source models such as GPT-5.5, beating GPT-5.5 on training cases while slightly underperforming GPT-5.5 on testing cases. PCPO also reduces inference-time token cost by internalizing grounded domain knowledge and prompt distillation. After training, the policy can be decoded without thinking and self-generated references, thus saving tokens on a long chain of misleading rationales and prompts. We also evaluate PCPO on an extended application, GPU kernel design, in an online setting. PCPO achieves significant speedups on 4 cases of KernelBench~\citep{kernelbench}, with an average 8.27$\times$ speedup against the PyTorch Eager baseline.

Our contributions can be summarized as three aspects:

(1) We empirically characterize in-context evolutionary stagnation in two specialized algorithm-design domains and present diagnostic evidence consistent with the hypothesis that repeatedly conditioning on flawed self-generated context can reinforce unproductive generation patterns, reducing the probability of producing high-quality algorithms.

(2) We introduce the Improvement Chain as a motivating abstraction with a conditional local-transfer effect: under log-probability closeness and likelihood-gain assumptions, learning a visited algorithm increases a lower-bound change in the likelihood of a neighboring algorithm. Motivated by the conditional proposition, we propose PCPO, which has a curated global population to retain and reuse high-quality, diverse self-generated algorithms, refreshing on-policy updates to enhance exploration, and periodic regularized off-policy updates to shift the policy towards stronger neighboring algorithms.

(3) We evaluate PCPO on the task of learning rate schedule design for global placement and GPU kernel design. PCPO improves empirical performance and inference-time token efficiency over the state-of-the-art in-context LLM-driven evolutionary methods and is even competitive with frontier closed-source models, demonstrating the benefit of population-curated parametric updates.

\section{Background}
LLM-driven evolutionary search has become a standard recipe for algorithm design in open problems such as scientific discovery. Basically, in-context evolutionary methods have three key components: population, selection strategy, and operator, which function as external context management. 

\textbf{Population }The population can be regarded as a delicate memory management mechanism that balances the quality and diversity of the produced algorithms: 
$\mathcal{A}_{\text{pop}}:=\{(\boldsymbol{A}_i,\boldsymbol{I}_i)\}_{i=1}^M$, where $\boldsymbol{A}_i$ and $\boldsymbol{I}_i$ denote the $i$-th algorithm in the population and its corresponding information that includes execution feedback, and $M$ is the population size. 

\textbf{Selection strategy }The selection strategy can be regarded as a delicate selective context injection mechanism that selects the most important references and information from the memory needed by the agents at the current stage: 
$\{\boldsymbol{A}_{\text{ref}}^i,\boldsymbol{I}_{\text{ref}}^i\}_{i=1}^k=\text{select}(\mathcal{A}_{\text{pop}})$, where $k$ denotes the number of selected algorithms. 

\textbf{Operator }The operators $C_\text{op}$ can be regarded as guiding instructions (e.g., explore new programs). 

At iteration $t$, in-context evolutionary methods select reference algorithms $\{\boldsymbol{A}_{\text{ref}}^i,\boldsymbol{I}_{\text{ref}}^i\}_{i=1}^k$ and construct $C_\text{op}$ to prompt the LLM to produce new algorithm designs that are later evaluated and written back into the population: $\boldsymbol{A}_\text{new}=\text{LLMs}(C_\text{task}, C_\text{op},\{\boldsymbol{A}^i_\text{ref}, \boldsymbol{I}^i_\text{ref}\}^k_{i=1})$, where $C_\text{task}$ is the context of the task prompt and $(C_\text{op},\{\boldsymbol{A}^i_\text{ref}, \boldsymbol{I}^i_\text{ref}\}^k_{i=1})$ can be regarded as the additional context $C^t_{\mathrm{extra}}$ at step $t$.

\section{Related Work} 
\textbf{Evolutionary Algorithm Design:} AlphaEvolve~\citep{alphaevolve} casts algorithm discovery as evolutionary search over an archive of programs, where a frozen LLM mutates selected parents and an island-based MAP-Elites archive balances quality against diversity. ShinkaEvolve~\citep{lange2026shinkaevolve} retains this loop and redesigns its parent selection, novelty rejection, and LLM routing. The evolutionary framework was first introduced by FunSearch~\citep{FunSearch}. EoH~\citep{EoH} and ReEvo~\citep{ReEvo} extend it to heuristic code. ThetaEvolve~\citep{wang2026thetaevolve} and TTT-Discover~\citep{ttt-discover} further internalize the search with test-time RL.

\textbf{Off-Policy GRPO:} On-policy GRPO discards each group after one update, which is costly when generation dominates the budget. Since a clipped off-policy GRPO surrogate still lower-bounds reward improvement when the behavior policy stays nearby~\citep{IBM-offpolicy}, RePO~\citep{li2025repo} adds an off-policy term to recover signal when on-policy advantages collapse. However, how the replay buffer is managed is first-order~\citep{Meta-offpolicy}: reuse is a trade-off among staleness and diversity, and a well-designed buffer can match on-policy accuracy at much lower cost.

\textbf{Self-Evolving Agent:} A self-evolving agent persistently rewrites its parameters, context, tools, or architecture from its own trajectories and feedback, aiming to improve future performance~\citep{self-evolving-survey-1, self-evolving-survey-2}.
Current systems do so mainly in three ways: evolving the harness (memory, tools, or workflows)~\citep{zhang2026memevolve}; updating the model with SFT or RL on self-generated experience~\citep{zhang2026selfevolving-offline-rl,wang2026thetaevolve,ttt-discover}; or selecting and mutating a population of programs at test-time~\citep{chen2026maxproof}.

\section{From In-Context Stagnation to Parametric Self-Evolution}
\subsection{An Intuitive Analysis of In-Context Evolutionary Stagnation}\label{sec:in-context stagnation}
Algorithms produced by LLMs in the form of canonicalized program representation can be modeled as $A_{\mathrm{output}}=\arg\max_A P_{\mathrm{LLM}}(A\mid C_{\mathrm{task}})$ given the task context $C_{\mathrm{task}}$, while in-context evolutionary methods inject operators, selected algorithms, and feedback as additional context $C^t_{\mathrm{extra}}$, yielding $A_t=\arg\max_A P_{\mathrm{LLM}}(A\mid C_{\mathrm{task}}, C^t_{\mathrm{extra}})$ at step $t$. Although this conditioning enables iterative search, it can sometimes stagnate when specialized domain knowledge is required to generate the optimal algorithm $A^*$. In particular,
{
\setlength{\abovedisplayskip}{4pt}
\setlength{\belowdisplayskip}{4pt}
\small
\begin{equation}
    \frac{P_{\mathrm{LLM}}(A^*\mid C_{\mathrm{task}},C^t_{\mathrm{extra}})}{P_{\mathrm{LLM}}(A^*\mid C_{\mathrm{task}})}
=
\frac{P_{\mathrm{LLM}}(C^t_{\mathrm{extra}}\mid C_{\mathrm{task}}, A^*)}
{P_{\mathrm{LLM}}(C^t_{\mathrm{extra}}\mid C_{\mathrm{task}})}.
\end{equation}
}Because $C^t_{\mathrm{extra}}$ largely consists of the frozen model's previous outputs, misleading reasoning may be repeatedly reintroduced. Such context remains probable under the same model, but its probability may drop sharply conditioned on $A^*$ if it is negatively associated with $A^*$, i.e., hallucinated self-generated content inside $C^t_{\mathrm{extra}}$ logically conflicts with the implementation of $A^*$. Under that condition, the probability of generating $A^*$ using in-context evolutionary methods decreases compared to simple direct generations, creating a self-reinforcing cycle of biased outputs and local stagnation. Practical examples consistent with this hypothesis are provided in Appendix~\ref{appendix-stagnation}.

We also construct some diagnostic experiments, whose results are consistent with the hypothesis. In Figure~\ref{ood}, we compare the best results within 16 rollouts of the full prompt with the prompt that excludes detailed domain knowledge description, and ``$\times$'' denotes failure to generate valid algorithms that satisfy the task constraints in all 16 rollouts. The performance drop relative to the full prompt shows that the base model relies on prompt-provided domain knowledge to improve performance rather than intrinsic knowledge on both tasks. Figure~\ref{stagnation} illustrates the in-context evolutionary stagnation, where in-context evolutionary methods underperform the base model and fail to make improvements along the evolution process. One possible explanation for this performance drop is that self-generated context reinforces unproductive generation patterns. To show the generality of these phenomena, cross-model experiments are provided in Appendix~\ref{appendix-cross-model}. 

\begin{figure}[t]
  \centering
  \begin{subfigure}[b]{0.45\textwidth}
    \centering
    \includegraphics[width=0.8\textwidth]{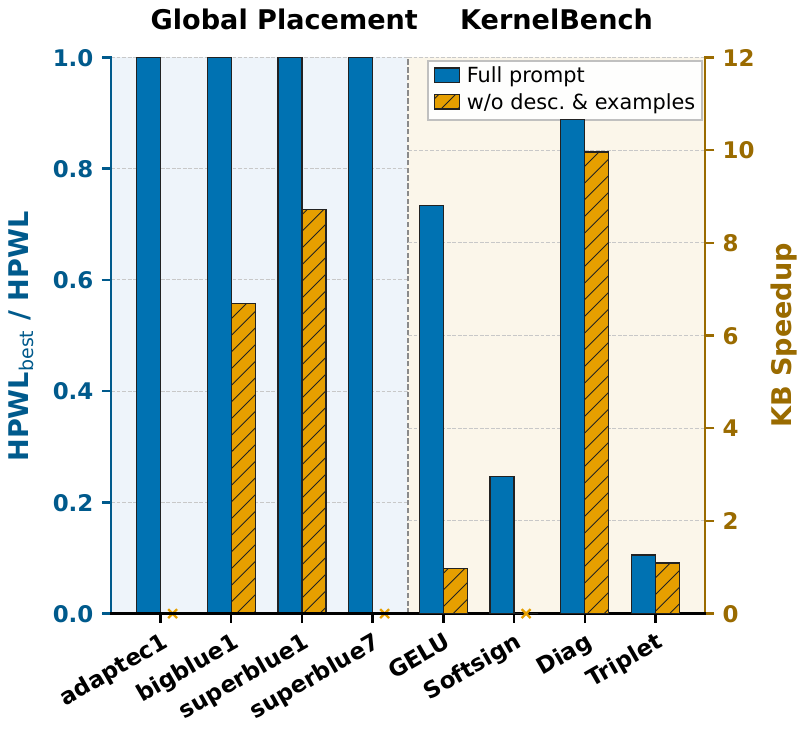}
    \setlength{\abovecaptionskip}{2pt}
    \caption{An illustration of relying on external domain knowledge rather than intrinsic knowledge.}
    \label{ood}
  \end{subfigure}
  \hfill
  \begin{subfigure}[b]{0.45\textwidth}
    \centering
    \includegraphics[width=0.8\textwidth]{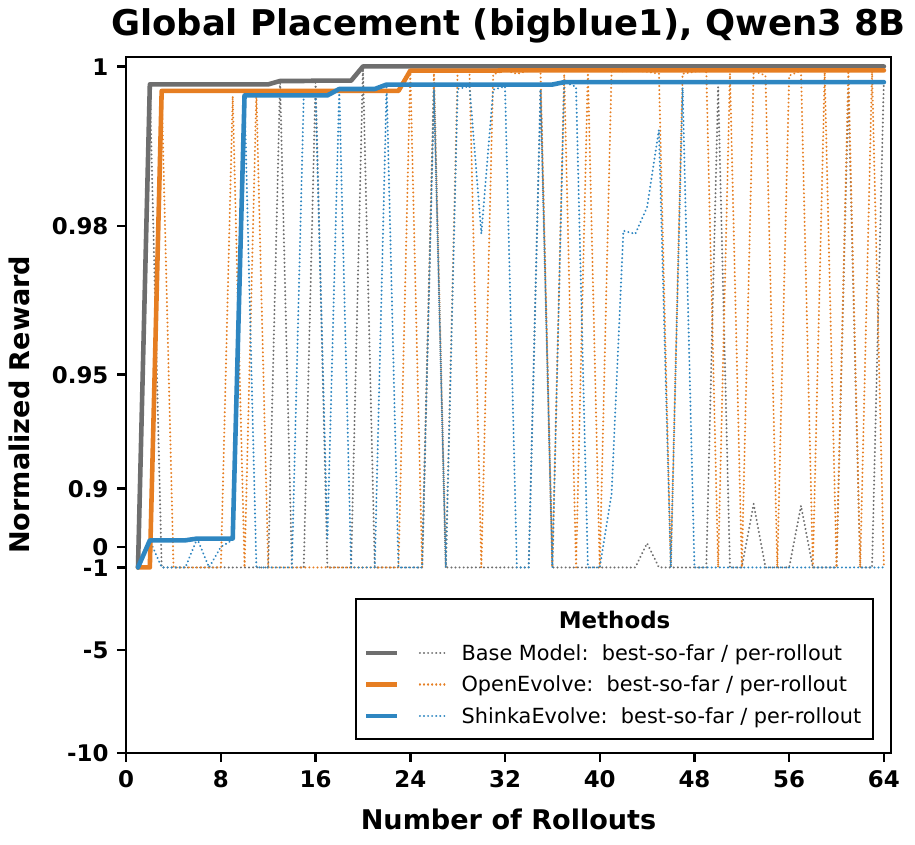}
    \setlength{\abovecaptionskip}{2pt}
    \caption{An illustration of the in-context evolutionary stagnation of OpenEvolve and ShinkaEvolve.}
    \label{stagnation}
  \end{subfigure}
  \setlength{\abovecaptionskip}{4pt}
  \caption{Diagnostic experiments based on Qwen3-8B.}
  \label{fig:diagnoses-qwen3}
  \vspace{-8pt}
\end{figure}

\subsection{Improvement Chain as a motivating abstraction}\label{IC}
As discussed in Section~\ref{sec:in-context stagnation}, repeatedly conditioning on misleading self-generated context may reduce the probability of producing high-quality algorithms in tasks that require domain knowledge. We therefore introduce the Improvement Chain as a conceptual model of parametric self-evolution, where, if each successor in the chain gradually approaches $A^*$, $P_{\mathrm{LLM}}(\cdot \mid C_{\mathrm{task}})$ is shifted towards more promising algorithms via parametric updates on curated self-generated algorithms, as shown in Figure~\ref{fig:IC-illustration}. Then, the central question we face is that, if training increases the likelihood of a visited algorithm, under what conditions does this update also increase the likelihood of a nearby algorithm?

\begin{figure}[t]
    \centering
    \includegraphics[width=0.8\textwidth]{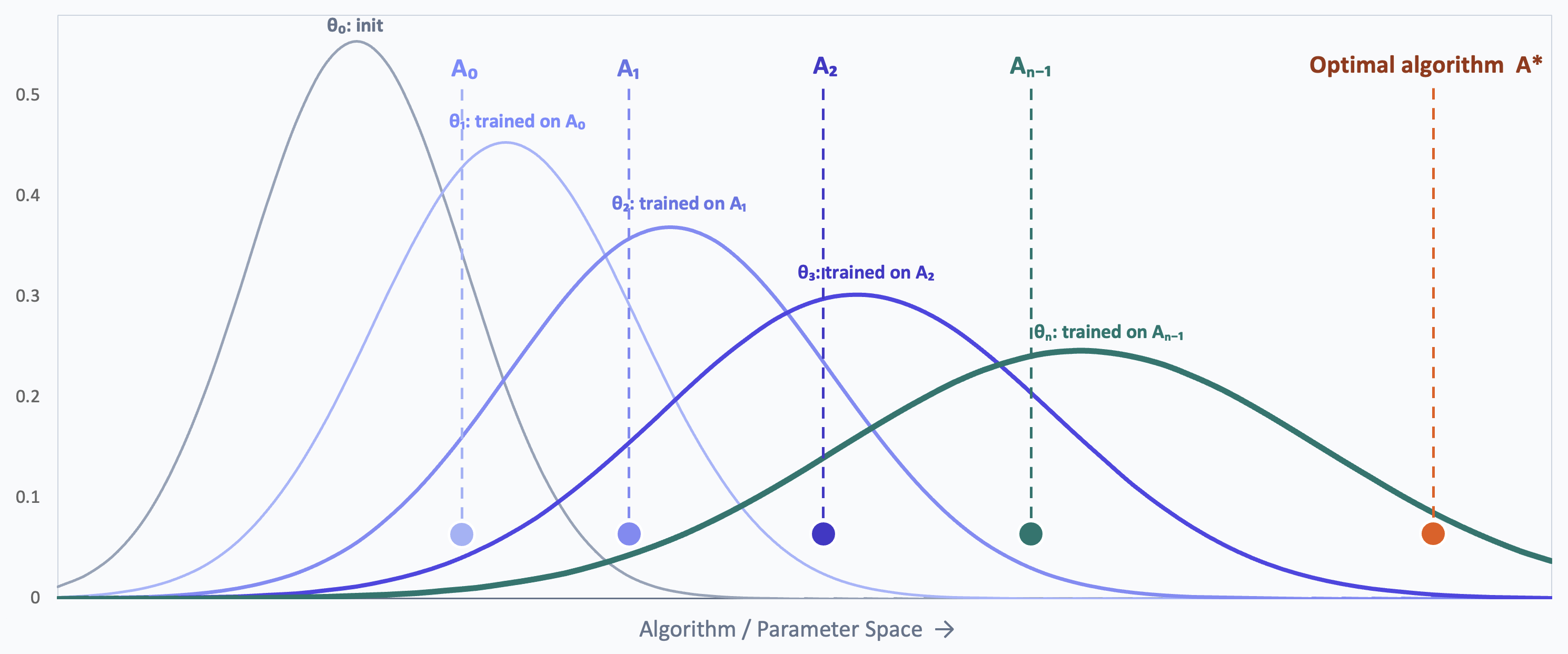}
    \caption{An illustration of the ideal Improvement Chain where each successor in the chain gradually approaches $A^*$. The optimal algorithm is approached by progressively increasing the reachability of successive chain nodes, generating and learning on chain nodes one after another.}
    \label{fig:IC-illustration}
    \vspace{-8pt}
\end{figure}

\textbf{Setup:} Let $\langle A_0, A_1, \ldots, A_n \rangle$ be an improvement chain with $A_n = A^*$. Note that in an ideal chain, $A_{i+1}$ does not necessarily have a better fitness score than $A_i$, but is more promising with respect to approaching the optimal $A^*$. This ordering represents a hypothesized target-directed relationship external to the proposition.
For each chain step $i \in \{0,1,\ldots,n-1\}$, let $\theta_i$ denote the model parameters
before learning on $A_i$, and let the update operator (i.e., training $\theta_i$ with $A_i$) produce
\begin{equation}
    \theta_{i+1}=\mathcal{U}(\theta_i;A_i).
    \label{eq:ic-update}
\end{equation}
For a fixed prompt context $C$, define
\begin{equation}
    \ell_\theta(A\mid C)\triangleq \log p_\theta(A\mid C).
    \label{eq:ic-logprob}
\end{equation}
We assume that the chain nodes have positive probability under the policies considered, so
that their log-probabilities are finite. Here, $A$ denotes a fixed, canonicalized program representation.

\textbf{From edit distance to probability closeness:} We first formalize the algorithmic neighborhood in sequence space.
Let $d(\cdot,\cdot)$ denote a token-level discrepancy metric  over algorithms. We require that chain edges are short edits, which is common in real iterative refinement conducted by agents:
\begin{equation}
    d(A_i, A_{i+1}) \le \Delta,
    \qquad \forall i \in \{0,1,\ldots,n-1\}.
    \label{eq:ic-edit-distance}
\end{equation}
The probability of generating adjacent algorithms with no significant logical differences from the same model usually remains similar, motivating Assumption~\ref{assump:ic-edit-lipschitz} for adjacent nodes in the chain.

\begin{assumption}[Local log-probability closeness before and after the update]
\label{assump:ic-edit-lipschitz}
There exists $\kappa>0$ such that, for every \(i\in\{0,\ldots,n-1\}\) and \(\theta\in\{\theta_i,\theta_{i+1}\}\),
\begin{equation}
    \left|
        \ell_{\theta}(A_i\mid C)
        -\ell_{\theta}(A_{i+1}\mid C)
    \right|
    \leq \kappa \cdot d(A_i,A_{i+1}).
    \label{eq:ic-local-smoothness}
\end{equation}
\end{assumption}

Since training the model on a visited algorithm increases the log-likelihood of generating it, we define $\epsilon_\ell$ as the smallest of such gains along the chain. That is, for every $i\in\{0,\ldots,n-1\}$,
\begin{equation}
    \ell_{\theta_{i+1}}(A_i\mid C)
    -\ell_{\theta_i}(A_i\mid C)
    \geq \epsilon_\ell.
    \label{eq:ic-self-gain}
\end{equation}


Then, utilizing Eqs.~\ref{eq:ic-edit-distance} to~\ref{eq:ic-self-gain}, i.e., that the gap between the log-probabilities of $A_i$ and $A_{i+1}$ is upper bounded by $\kappa \Delta$ and an update raises the log-likelihood of $A_i$ by at least $\epsilon_\ell$,  we can derive Proposition~\ref{lem:ic-spillover}, showing that the log-likelihood increment of generating $A_{i+1}$ after an update is lower bounded by $\epsilon_\ell-2\kappa \Delta$. Proofs are provided in Appendix~\ref{appendix-proof}.

\begin{proposition}[Local likelihood transfer]
\label{lem:ic-spillover}
Under Assumptions~\ref{assump:ic-edit-lipschitz}, for every
$i\in\{0,\ldots,n-1\}$,
\begin{equation}
    \ell_{\theta_{i+1}}(A_{i+1}\mid C)
    -\ell_{\theta_i}(A_{i+1}\mid C)
    \geq
    \epsilon_\ell-2\kappa d(A_i,A_{i+1})
    \geq
    \epsilon_\ell-2\kappa\Delta.
    \label{eq:ic-log-transfer}
\end{equation}
Consequently, if $\rho_\ell\triangleq\epsilon_\ell-2\kappa\Delta>0$, then
\begin{equation}
    p_{\theta_{i+1}}(A_{i+1}\mid C)
    \geq
    e^{\rho_\ell}p_{\theta_i}(A_{i+1}\mid C).
    \label{eq:ic-multiplicative-transfer}
\end{equation}
\end{proposition}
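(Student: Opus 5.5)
The plan is a three-term telescoping decomposition followed by the triangle-type bounds. For a fixed step $i$, I write the change in the log-likelihood of the successor $A_{i+1}$ as
\begin{align*}
\ell_{\theta_{i+1}}(A_{i+1}\mid C)-\ell_{\theta_i}(A_{i+1}\mid C)
&=\bigl[\ell_{\theta_{i+1}}(A_{i+1}\mid C)-\ell_{\theta_{i+1}}(A_i\mid C)\bigr]\\
&\quad+\bigl[\ell_{\theta_{i+1}}(A_i\mid C)-\ell_{\theta_i}(A_i\mid C)\bigr]\\
&\quad+\bigl[\ell_{\theta_i}(A_i\mid C)-\ell_{\theta_i}(A_{i+1}\mid C)\bigr].
\end{align*}
This is an exact identity, since the intermediate terms cancel. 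All log-probabilities are finite by the standing positivity assumption on chain nodes, so no $\infty-\infty$ issue arises.

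Next I bound each bracket from below.

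The middle bracket is the self-gain on the visited algorithm. By Eq.~\ref{eq:ic-self-gain} it is at least $\epsilon_\ell$.

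The first and third brackets compare adjacent chain nodes under a single fixed parameter. For the first I take $\theta=\theta_{i+1}$, and for the third I take $\theta=\theta_i$. Assumption~\ref{assump:ic-edit-lipschitz} is stated for both parameters, so each absolute difference is at most $\kappa\, d(A_i,A_{i+1})$. Using $x\ge -|x|$, each of these brackets is therefore at least $-\kappa\, d(A_i,A_{i+1})$.

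Summing the three bounds gives the first inequality of Eq.~\ref{eq:ic-log-transfer}. The second inequality follows from the short-edit condition Eq.~\ref{eq:ic-edit-distance}, $d(A_i,A_{i+1})\le\Delta$, together with $\kappa>0$.

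For the multiplicative statement, I set $\rho_\ell=\epsilon_\ell-2\kappa\Delta$. The bound just proved says $\log p_{\theta_{i+1}}(A_{i+1}\mid C)-\log p_{\theta_i}(A_{i+1}\mid C)\ge\rho_\ell$. Exponentiating, which is monotone, and multiplying by the positive quantity $p_{\theta_i}(A_{i+1}\mid C)$ gives Eq.~\ref{eq:ic-multiplicative-transfer}. The hypothesis $\rho_\ell>0$ is not needed for the inequality itself; it only makes the factor $e^{\rho_\ell}$ exceed $1$, so that the inequality expresses a genuine increase.

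There is no real analytic obstacle. The one point requiring care is applying the closeness assumption at both $\theta_i$ and $\theta_{i+1}$. This is why the assumption is stated before and after the update, and it is the source of the factor $2$ in front of $\kappa$.
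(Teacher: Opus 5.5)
Your proposal is correct and matches the paper's proof: the same three-bracket telescoping identity, with the outer brackets bounded below by $-\kappa\, d(A_i,A_{i+1})$ via Assumption~\ref{assump:ic-edit-lipschitz} at $\theta_{i+1}$ and $\theta_i$, the middle bracket bounded by $\epsilon_\ell$ via Eq.~\ref{eq:ic-self-gain}, and exponentiation for the multiplicative form. Your remarks are also accurate and make explicit steps the paper leaves tacit: the second inequality uses $\kappa>0$ together with Eq.~\ref{eq:ic-edit-distance}, and $\rho_\ell>0$ is needed only to make $e^{\rho_\ell}>1$.
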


Therefore, Proposition~\ref{lem:ic-spillover} gives a sufficient condition under which training on the current algorithm increases the likelihood of a nearby algorithm. In particular, if the direct gain $\epsilon_\ell$ on the log-likelihood of $A_i$ exceeds twice the worst-case gap $\kappa\Delta$ between the log-probabilities of $A_i$ and $A_{i+1}$, the successor receives a multiplicative probability improvement. If each successor in the chain gradually approaches $A^*$, such local probability transfer could increase the chance of sampling successors one after another, facilitating the traversal of the whole chain and shifting the task-conditional distribution of the base model towards the optimal algorithm. Appendix~\ref{appendix-gp-case} provides a practical example to show the actual magnitude of the values of the terms $\epsilon_\ell$ and $\kappa\Delta$, satisfying $\rho_\ell>0$.

\section{Practical Algorithm: Population-Curated Policy Optimization}

\begin{figure}[t]
\centering
\includegraphics[width=0.9\textwidth]{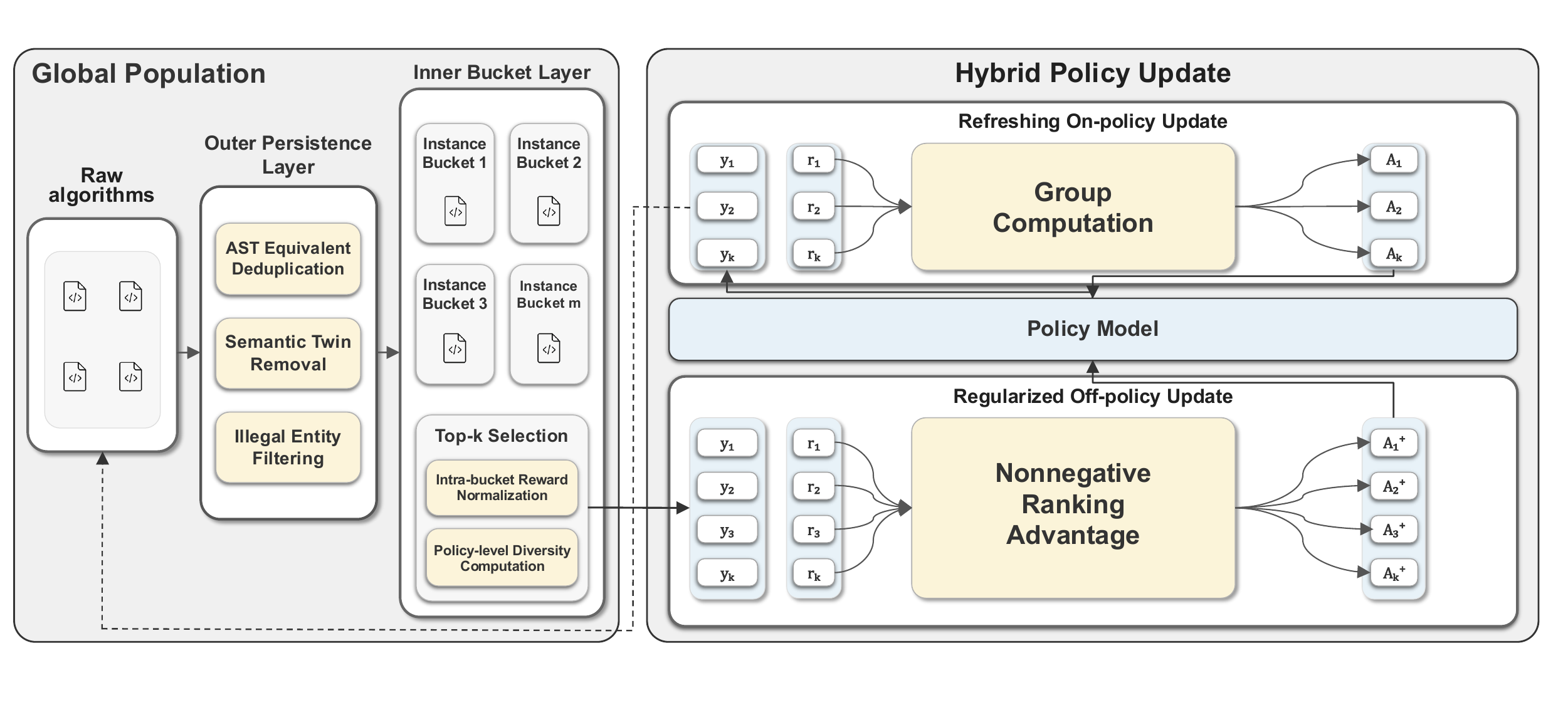} 
\setlength{\abovecaptionskip}{0pt}
\caption{The workflow of PCPO. The left part illustrates the Global Population, filtering promising candidates while avoiding incompatible reward scales and difficulties. The right part illustrates the Hybrid Policy Update scheme, supporting sustained improvement, exploration, and rapid correction.}
\label{fig:workflow}
\vspace{-8pt}
\end{figure}

\paragraph{From Ideal Chains to PCPO}
Section~\ref{IC} provides a stylized account of one possible local-transfer effect but does not establish monotonic fitness improvement, guarantee the existence or discovery of an improvement chain, or provide a convergence guarantee. Instead, it motivates a practical design principle: retain useful algorithms, update the policy with promising elites, explore diverse variants in their neighborhoods, and shift the policy towards stronger algorithms. We therefore propose Population-Curated Policy Optimization as follows: retain a verified program \(A_i\) as a high-quality candidate in the global population, encourage
exploration beyond near-duplicate programs by structural-diversity curation, approximate the likelihood-increasing update by positive off-policy reinforcement, and search for potential successors by fresh on-policy sampling.


\subsection{Global Population} 
As shown in the left part of Figure~\ref{fig:workflow}, Global Population manages evaluated population experiences by retaining and reusing high-quality, diverse self-generated algorithms for focused learning. It consists of an Outer Persistence Layer and an Inner Bucket Layer.

\textbf{The Outer Persistence Layer} maintains a compact, quality-controlled archive $\mathcal{P}$, allowing valuable feedback to be reused beyond a single on-policy rollout. It removes redundant or invalid candidates through AST Equivalent Deduplication, Semantic Twin Removal, and Illegal Entity Filtering.

AST Equivalent Deduplication removes logically identical programs that differ only in presentation, such as comments, docstrings, or whitespace. For each code $c$, we parse its abstract syntax tree (AST), remove leading module/class/function docstrings, serialize the normalized tree, and hash it to obtain an equivalence key $\kappa(c)$. Codes sharing the same key are treated as duplicates, preventing superficial variants from consuming archive capacity or dominating Top-$k$ selection.

Semantic Twin Removal prevents the archive from collapsing into near-identical elites after exact deduplication. We compute a structural distance $\mathrm{Div}_{\mathrm{struct}}(c,c')$ from ASTs normalized by docstring removal and $\alpha$-renaming of bound identifiers. Given a keeper $h$ (i.e., an elite already in the population) and a candidate $g$ satisfying $\mathrm{Div}_{\mathrm{struct}}(g,h)\leq\delta_{\max}$, $g$ is retained only if $\mathrm{fitness}(g)\geq\mathrm{fitness}(h)+\gamma_{\min}$. Thus, small structural variations are preserved only when they provide a meaningful fitness improvement, maintaining both archive quality and diversity.

Illegal Entity Filtering serves as a hard validity gate: programs with syntax or execution failures, or those violating task-specific constraints, are excluded from $\mathcal{P}$ and subsequent Top-$k$ curation. 

\textbf{The Inner Bucket Layer} partitions the global archive into instance-specific buckets $\{\mathcal{P}_b\}_{b=1}^{m}$, preventing incompatible reward scales and task difficulties from being mixed directly while retaining the sample-efficiency benefits of multi-instance training. Within each bucket, candidates are ranked for regularized off-policy learning using normalized reward and policy-level diversity.

Intra-bucket Reward Normalization rescales eligible fitness values to $[\varepsilon,1]$ while preserving their within-bucket ranking. For a candidate $c\in\mathcal{P}_b$ with fitness $r$, let $\mathcal{R}_b$ denote the eligible rewards within the bucket $b$ and define $r_{\min}=\min\mathcal{R}_b$ and $r_{\max}=\max\mathcal{R}_b$. Then we have
{
\setlength{\abovedisplayskip}{4pt}
\setlength{\belowdisplayskip}{4pt}
\small
\begin{equation}
    \tilde{r}_b(c)
    =
    \begin{cases}
    1, & \text{if } r_{\max}=r_{\min}, \\
    \varepsilon+(1-\varepsilon)\dfrac{r-r_{\min}}{r_{\max}-r_{\min}}, & \text{otherwise},
    \end{cases}
    \end{equation}
}where $\varepsilon>0$ ensures a strictly positive quality score.

Policy-level Diversity Computation measures how structurally different an archived candidate is from codes recently generated by the current policy on the same instance (e.g., the same chip case): 
{
\setlength{\abovedisplayskip}{4pt}
\setlength{\belowdisplayskip}{4pt}
\small
\begin{equation}
        d_b(c)
        =
        \begin{cases}
        \dfrac{1}{|\mathcal{C}_b^{(t)}|}
        \sum_{c'\in\mathcal{C}_b^{(t)}}
        \mathrm{Div}_{\mathrm{struct}}(c,c'),
        & \text{if }\mathcal{C}_b^{(t)}\neq\emptyset, \\
        \mathrm{Div}_{\mathrm{struct}}(c,\,c_b^*),
        & \text{otherwise}.
        \end{cases}
    \end{equation}
}where $\mathcal{C}_b^{(t)}$ is the set of distinct eligible codes generated by the current policy at step $t$ for the instance bucket $b$, and $c^*_b$ is the code with the best reward inside the bucket $b$. A larger $d_b(c)\in[0,1]$ indicates greater novelty relative to the current policy.

Finally, candidates are ranked by the weighted score $s_b(c)=\alpha\cdot\,\tilde{r}_b(c)+\beta_\text{div}\cdot\,d_b(c)$, and the Top-$k$ candidates from each bucket are selected as elites for the regularized off-policy update.

\subsection{Hybrid Policy Update}
As shown in the right part of Figure~\ref{fig:workflow}, Hybrid Policy Update learns promising candidates curated by Global Population and shifts the policy towards stronger neighbors while exploring new ones. It combines a Refreshing On-policy Update, performed at every step for exploration, with a Regularized Off-policy Update, performed every $T$ steps to reuse high-quality experiences.

\textbf{Regularized Off-policy Update}: Rather than using SFT~\citep{sft,sft-survey} to imitate the token-level form of elite codes, we treat selected elites as off-policy trajectories and reinforce them through sequence-level advantages. This encourages the policy to learn transferable improvement patterns~\citep{sft-rl, sun2026rl-code}. Because stale off-policy samples can cause distribution drift and ratio explosion~\citep{Meta-offpolicy, IBM-offpolicy}, we retain clipped importance ratios and KL regularization toward a fixed reference policy. Selecting recent and diverse elites as candidates further limits sample staleness. 

For the $k$ elites in bucket $b$, sorted by decreasing environment return, we introduce a Nonnegative Ranking Advantage. Assigning rank weight $w_i=k-i+1$ and setting $W=k(k+1)/2$, the advantage of elite $i$ is $\hat{A}_i^{+}
=\hat{A}_0\cdot\frac{w_i}{W}\cdot k
=\hat{A}_0\cdot\frac{2(k-i+1)}{k+1},\text{ }i=1,\text{ }\ldots,\text{ }k,\text{ }\hat{A}_0=1$. 
This positive rank profile has mean $\hat{A}_0$: stronger elites receive greater reinforcement, while every verified Top-$k$ candidate retains a positive advantage, avoiding the negative feedback that standard within-group GRPO may assign to valid but lower-ranked elites. The off-policy objective is
{
\setlength{\abovedisplayskip}{4pt}
\setlength{\belowdisplayskip}{4pt}
\small
\begin{equation}
\mathcal{J}_{\mathrm{off}}(\theta)
=
\frac{1}{k}
\sum_{i=1}^{k}
\frac{1}{|c_i|}
\sum_{t=1}^{|c_i|}
\Big[\min\!\Big(
    r_{i,t}(\theta) A_i^{+},
    \operatorname{clip}\!\bigl(
        r_{i,t}(\theta),
        1-\varepsilon_\text{clip},
        1+\varepsilon_\text{clip}
    \bigr) A_i^{+}
\Big)-\beta D_{\mathrm{KL}}(\pi_\theta\|\pi_{\mathrm{ref}})
\Big].
\label{eq:token-level-off-policy-objective}
\end{equation}
}
where
$r_{i,t}(\theta)
=
\frac{
    \pi_{\theta}\!\left(c_{i,t}\mid q,c_{i,<t}\right)
}{
    \pi_{\mathrm{old},i}\!\left(c_{i,t}\mid q,c_{i,<t}\right)
}$, $\pi_{\mathrm{old},i}$ generated elite $c_i$, and $\pi_{\mathrm{ref}}$ is the fixed reference policy.

\textbf{Refreshing On-policy Update}: At every step, Hybrid Policy Update applies standard GRPO with fresh rollouts and a smaller learning rate. Immediate environmental feedback promotes exploration and supplies new candidates to Global Population, while strong penalties rapidly suppress invalid or constraint-violating code. These fresh updates also anchor the policy to its current rollout distribution, mitigating the drift introduced by periodic off-policy updates.

\section{Experiment}
We evaluate our method on two algorithm-design scenarios, i.e., global placement in EDA~\citep{chu2009placement} and the extended experiment, GPU kernel design~\citep{kernelbench}. We adopt Qwen3-8B~\citep{yang2025qwen3} as the base model and equip it with in-context evolutionary harnesses like OpenEvolve (the open-source implementation of AlphaEvolve~\citep{alphaevolve}) and ShinkaEvolve~\citep{lange2026shinkaevolve}. We also provide GRPO~\citep{deepseek-r1} as a comparable baseline enhanced by reinforcement learning. Moreover, we further compare our method to frontier closed-source models such as GPT-5.5, as well as a general-purpose coding agent harness, Codex. Note that since ThetaEvolve~\citep{wang2026thetaevolve} is a test-time learning method, we provide global placement results in an online setting in Appendix~\ref{appedic-inference-time-exp}. Cross-model results are shown in Appendix~\ref{appendix-cross-model}.

\subsection{Global Placement} \label{exp-gp}
Global placement (GP) is a crucial step in chip placement~\citep{chu2009placement}, which has a direct and significant impact on the performance, power consumption, and area (PPA) of the final chip design, as it determines the locations of the cells within the given chip layout area~\citep{markov2012progress}. The goal of GP is to minimize half-perimeter wirelength (HPWL) while meeting the target overflow. In this experiment, we apply algorithm-design agents to design the learning rate schedules for the Adam optimizer~\citep{kingma2014adam} of the GPU-accelerated placer DREAMPlace~\citep{lin2020dreamplace}, which, as shown empirically, has a great impact on the quality and convergence of the GP process and is hard to tune~\citep{agnesina2023autodmp}. Designing the learning rate schedules requires domain knowledge, balancing the wire-length and cell density via changing the positions of cells through gradient backpropagation~\citep{lin2020dreamplace}. More detailed descriptions of GP are in Appendix~\ref{appendix-gp-intro}.

\begin{table*}[t]
  \centering
  \caption{Best@$n$ HPWL ($\times 10^6$) results on two global placement benchmarks. The lower, the better. The 4 cases used for RL training are underlined.}
  \vspace{-4pt}
  \label{tab:globalplacement-benchmark}
  \small
  \setlength{\tabcolsep}{3pt}
  \begin{adjustbox}{max width=\textwidth}
  \begin{tabular}{lccccccccccccccc}
    \toprule
     & \multicolumn{3}{c}{Qwen3-8B} & \multicolumn{3}{c}{OpenEvolve} & \multicolumn{3}{c}{ShinkaEvolve} & \multicolumn{3}{c}{GRPO} & \multicolumn{3}{c}{PCPO} \\
    \cmidrule(lr){2-4} \cmidrule(lr){5-7} \cmidrule(lr){8-10} \cmidrule(lr){11-13} \cmidrule(lr){14-16}
     & best@1 & best@4 & best@16 & best@1 & best@4 & best@16 & best@1 & best@4 & best@16 & best@1 & best@4 & best@16 & best@1 & best@4 & best@16 \\
    \midrule
    \underline{adaptec1}    & 71.11  & 71.09  & 71.01  & $\times$     & $\times$     & 70.54  & $\times$     & 71.58  & 71.58  & $\times$     & 71.22  & 70.51  & \textbf{70.80} & \textbf{70.36} & \textbf{70.15} \\
    adaptec2    & $\times$     & $\times$     & $\times$     & $\times$     & 80.27  & 79.35  & 82.64  & 81.04  & 81.04  & 80.18  & 80.11  & 80.11  & \textbf{80.12} & \textbf{78.83} & \textbf{78.83} \\
    adaptec3    & $\times$     & $\times$     & 187.64 & 234.01 & 233.98 & 233.98 & \textbf{187.71} & 187.71 & 187.71 & $\times$     & 187.90 & 186.94 & 187.82 & \textbf{186.40} & \textbf{186.25} \\
    adaptec4    & 171.45 & 171.45 & 171.36 & 206.43 & 206.43 & 206.43 & 175.33 & 173.18 & 173.18 & 172.12 & 171.69 & 170.55 & \textbf{171.40} & \textbf{171.40} & \textbf{168.26} \\
    \underline{bigblue1}    & \textbf{88.45} & 88.22  & 88.20  & $\times$     & 88.90  & 88.90  & $\times$     & 208.83 & 89.21  & 88.75  & 88.64  & 87.67  & 88.59  & \textbf{88.06} & \textbf{87.65} \\
    bigblue2    & $\times$     & 133.03 & 132.94 & \textbf{133.04} & 133.00 & 132.97 & $\times$     & 133.00 & 132.41 & $\times$     & 133.07 & 132.45 & $\times$     & \textbf{132.93} & \textbf{132.28} \\
    bigblue3    & $\times$     & 326.46 & 326.39 & $\times$     & $\times$     & $\times$     & 345.57 & \textbf{303.47} & 303.47 & $\times$     & 326.42 & 323.87 & \textbf{328.14} & 326.58 & \textbf{303.11} \\
    bigblue4    & $\times$     & $\times$     & $\times$     & 757.67 & 757.61 & 757.53 & \textbf{732.25} & \textbf{732.25} & 732.25 & $\times$     & 756.62 & 756.62 & 755.41 & 748.17 & \textbf{727.46} \\
    \underline{superblue1}  & $\times$     & \textbf{388.12} & 387.80 & \textbf{389.19} & 389.19 & 389.19 & $\times$     & 468.06 & 468.06 & 536.79 & 388.94 & 388.12 & 389.67 & 388.95 & \textbf{385.68} \\
    superblue3  & $\times$     & 469.40 & 469.40 & \textbf{462.31} & 462.31 & 460.57 & $\times$     & $\times$     & $\times$     & 465.64 & 461.37 & 461.23 & 463.14 & \textbf{461.36} & \textbf{455.29} \\
    superblue4  & $\times$     & 291.76 & 291.76 & $\times$     & $\times$     & $\times$     & $\times$     & $\times$     & $\times$     & $\times$     & 296.83 & 290.60 & \textbf{291.58} & \textbf{291.58} & \textbf{289.79} \\
    superblue5  & $\times$     & 454.36 & 452.02 & $\times$     & 459.10 & 459.10 & $\times$     & $\times$     & $\times$     & \textbf{452.33} & 452.33 & 451.94 & 457.61 & \textbf{452.20} & \textbf{450.76} \\
    \underline{superblue7}  & 728.17 & 723.62 & 723.62 & 556.00 & 556.00 & 556.00 & $\times$     & $\times$     & $\times$     & 707.16 & 707.16 & 549.55 & \textbf{551.07} & \textbf{549.45} & \textbf{547.43} \\
    superblue10 & 862.16 & 862.08 & 862.02 & 861.26 & 861.26 & 861.25 & $\times$     & 863.92 & 863.92 & 861.51 & 861.51 & 859.15 & \textbf{860.22} & \textbf{860.22} & \textbf{856.73} \\
    superblue16 & 401.68 & 401.36 & 401.32 & 401.35 & 401.35 & 401.35 & $\times$     & $\times$     & $\times$     & \textbf{401.03} & 401.03 & 400.68 & 401.59 & \textbf{400.73} & \textbf{400.31} \\
    superblue18 & 233.75 & 233.35 & 233.11 & 223.88 & 223.88 & 223.84 & $\times$     & $\times$     & 224.45 & 232.18 & 223.78 & 223.14 & \textbf{223.47} & \textbf{223.47} & \textbf{222.82} \\
    \midrule
    Avg rank    & 3.66   & 3.38   & 3.69   & 2.91   & 3.62   & 3.78   & 3.72   & 3.94   & 4.16   & 3.06   & 2.69   & 2.38   & \textbf{1.66}   & \textbf{1.38}   & \textbf{1.00}   \\
    \bottomrule
  \end{tabular}
  \end{adjustbox}
  \vspace{-12pt}
\end{table*}

As shown in Table~\ref{tab:globalplacement-benchmark}, we test PCPO on two popular open-source benchmarks for GP, ISPD 2005~\citep{ispd2005} and ICCAD 2015~\citep{iccad2015}. The metric best@$n$ shows the best result achieved within $n$ rollouts. ``$\times$" denotes that no valid result satisfying the task constraints is achieved within $n$ rollouts. For RL methods, models are trained on 4 representative chip cases within their own series and are then directly used for inference on all cases. PCPO achieves the best average rank on all best@$n$ metrics, which demonstrates its improvement within a small rollout budget and its stability on one-shot generations. These results indicate that PCPO effectively retains and reuses valuable and diverse experiences for parametric learning, inducing generalizable capabilities. Note that in-context evolutionary methods show worse average performance compared to the base model, which is consistent with the hypothesis made in Section~\ref{sec:in-context stagnation}. Figure~\ref{fig:mix_training} further shows PCPO's sample efficiency in mixed-case training brought by population experience reuse compared to GRPO. PCPO also outperforms RePO~\citep{li2025repo}, a hybrid-update RL method, suggesting the importance of balancing the quality and diversity of population experiences, as shown in Table~\ref{tab:repo}. As shown in Figure~\ref{fig:token}, PCPO also reduces inference-time token cost by internalizing grounded domain knowledge (no need to perform long‑range reasoning) and distilling the detailed prompt (e.g., in-context exemplars that evolutionary methods repeatedly inject). Thus, the cost of training is worthwhile because after a single training run, PCPO can achieve outstanding performance on broader cases in the specific domain with minimal token consumption. More ablation results are in Appendix~\ref{appendix-gp-ablation}. 

Moreover, to demonstrate the effectiveness of self-evolving via PCPO, we compare PCPO to baseline results (DREAMPlace~\citep{lin2020dreamplace} with human-set parameters), as well as GPT-5.5, GPT-4o, and Codex. We use the average best@16 of 16 independent runs for a fair and stable comparison. As shown in Table~\ref{tab:gp-strong}, trained on only 4 cases, PCPO outperforms baseline results with an average improvement of 1.54\%, surpassing GPT-4o and Codex. Compared to GPT-5.5, PCPO also achieves comparable average results (i.e., beating GPT-5.5 on training cases while slightly underperforming it on testing cases), with better results on 10 cases based on the 8B base model. Note that Codex can be seen as another in-context evolutionary method because it automatically selects candidate codes and improves them. The performance degradation is also consistent with the hypothesis that hallucinated self-generated context can reinforce a cycle of biased outputs and local stagnation.
\begin{table*}[t]
  \centering
  \caption{Average best@16 HPWL ($\times 10^6$) of 16 independent runs and relative improvement. The 4 cases used for RL training are underlined.}
  \vspace{-4pt}
  \label{tab:gp-strong}
  \setlength{\tabcolsep}{3.5pt}
  \begin{adjustbox}{max width=0.8\textwidth}
  \begin{tabular}{l|c|cc|cc|cc|cc}
    \toprule
     \multirow{2}{*}{Chip case}
     & DREAMPlace
     & \multicolumn{2}{c|}{GPT-4o}
     & \multicolumn{2}{c|}{GPT-5.5}
     & \multicolumn{2}{c|}{Codex-GPT-5.5}
     & \multicolumn{2}{c}{PCPO-Qwen3-8B} \\
     \cmidrule(lr){3-4} \cmidrule(lr){5-6} \cmidrule(lr){7-8} \cmidrule(lr){9-10}
     & HPWL & best@16 & Imp. (\%) & best@16 & Imp. (\%) & best@16 & Imp. (\%) & best@16 & Imp. (\%) \\
    \midrule
    \underline{adaptec1} & 73.27 & 71.04 & +3.04\% & 70.32 & +4.03\% & 73.61 & -0.46\% & \textbf{70.18} & \textbf{+4.22\%} \\
    adaptec2 & 82.28 & 84.63 & -2.86\% & 80.16 & +2.58\% & 79.84 & +2.96\% & \textbf{78.92} & \textbf{+4.08\%} \\
    adaptec3 & 189.17 & 186.20 & +1.57\% & 186.19 & +1.57\% & 203.45 & -7.55\% & \textbf{186.14} & \textbf{+1.60\%} \\
    adaptec4 & 172.15 & \textbf{167.56} & \textbf{+2.67\%} & 169.12 & +1.76\% & 230.42 & -33.85\% & 168.34 & +2.21\% \\
    \underline{bigblue1} & 88.93 & \textbf{87.28} & \textbf{+1.86\%} & 87.54 & +1.56\% & 169.12 & -90.18\% & 87.43 & +1.69\% \\
    bigblue2 & 133.26 & \textbf{131.91} & \textbf{+1.01\%} & \textbf{131.91} & \textbf{+1.01\%} & 132.54 & +0.54\% & 132.24 & +0.77\% \\
    bigblue3 & 305.24 & 340.30 & -11.49\% & \textbf{292.92} & \textbf{+4.04\%} & 296.77 & +2.78\% & 302.98 & +0.74\% \\
    bigblue4 & 737.65 & 764.68 & -3.66\% & 736.09 & +0.21\% & 837.54 & -13.54\% & \textbf{727.29} & \textbf{+1.40\%} \\
    \underline{superblue1} & 390.08 & 389.08 & +0.26\% & 385.69 & +1.12\% & 390.71 & -0.16\% & \textbf{385.56} & \textbf{+1.16\%} \\
    superblue3 & 462.22 & 445.43 & +3.63\% & \textbf{444.38} & \textbf{+3.86\%} & 512.66 & -10.91\% & 455.34 & +1.49\% \\
    superblue4 & 292.11 & 289.07 & +1.04\% & 289.25 & +0.98\% & \textbf{288.97} & \textbf{+1.07\%} & 289.82 & +0.78\% \\
    superblue5 & 452.34 & 451.06 & +0.28\% & \textbf{449.88} & \textbf{+0.54\%} & 451.11 & +0.27\% & 450.57 & +0.39\% \\
    \underline{superblue7} & 556.69 & 547.54 & +1.64\% & 551.60 & +0.91\% & 549.66 & +1.26\% & \textbf{547.21} & \textbf{+1.70\%} \\
    superblue10 & 861.29 & 861.12 & +0.02\% & 856.78 & +0.52\% & 856.82 & +0.52\% & \textbf{856.52} & \textbf{+0.55\%} \\
    superblue16 & 402.58 & 400.54 & +0.51\% & \textbf{399.76} & \textbf{+0.70\%} & 399.97 & +0.65\% & 400.08 & +0.62\% \\
    superblue18 & 224.86 & 230.06 & -2.31\% & 222.82 & +0.91\% & \textbf{219.07} & \textbf{+2.58\%} & 221.94 & +1.30\% \\
    \midrule
    Avg-train & 1.000 & 0.983 & +1.70\% & 0.981 & +1.91\% & 1.224 & -22.38\% & \textbf{0.978} & \textbf{+2.19\%} \\
    Avg-test & 1.000 & 1.008 & -0.80\% & \textbf{0.984} & \textbf{+1.56\%} & 1.045 & -4.54\% & 0.987 & +1.33\% \\
    Avg & 1.000 & 1.002 & -0.17\% & \textbf{0.984} & \textbf{+1.64\%} & 1.090 & -9.00\% & 0.985 & +1.54\% \\
    \bottomrule
  \end{tabular}
  \end{adjustbox}
\end{table*}

\begin{table}[t]
  \centering
  \captionsetup{belowskip=2pt}
  \caption{Average best speedups of 4 independent runs on 4 cases of KernelBench. Higher is better.}
  \label{tab:kernelbench-benchmark}
  \setlength{\tabcolsep}{4pt}
  \begin{adjustbox}{max width=0.83\textwidth}
  \begin{tabular}{lcccc|c}
    \toprule
    Method &
    \makecell{MinGPTNewGelu} &
    \makecell{SoftSign} &
    \makecell{MatmulWithDiagonalMatrices} &
    \makecell{TripletMarginLoss} &
    Avg rank \\
    \midrule
    Qwen3-8B     & $8.801\times$ & $2.957\times$ & $10.664\times$ & $1.267\times$ & 6.00 \\
    OpenEvolve  & $8.741\times$ & $3.373\times$ & $10.478\times$ & $1.277\times$ & 5.25 \\
    ShinkaEvolve & $8.727\times$ & $3.373\times$ & $10.478\times$ & $1.094\times$ & 6.00 \\
    ThetaEvolve & $8.986\times$ & $3.306\times$ & $15.960\times$ & $4.524\times$ & 4.00 \\
    \midrule
    GPT-5.5      & $9.018\times$ & $2.939\times$ & $\textbf{16.621}\times$ & $4.537\times$ & 3.38 \\
    Codex        & $8.712\times$ & $3.350\times$ & $\textbf{16.621}\times$ & $4.419\times$ & 4.38 \\
    \midrule
    GRPO         & $9.002\times$ & $3.071\times$ & $10.169\times$ & $3.400\times$ & 5.50 \\
    PCPO         & $\textbf{9.033}\times$ & $\textbf{3.446}\times$ & $16.067\times$ & $\textbf{4.551}\times$ & \textbf{1.50} \\
    \bottomrule
  \end{tabular}
  \end{adjustbox}
  \vspace{-12pt}
\end{table}

\subsection{Extended Experiment: GPU Kernel Design}
GPU kernel design is a hardware-aware optimization process that maps high-level tensor computations onto the parallel execution and memory hierarchy of a target accelerator, greatly influencing the GPU computational efficiency and energy consumption~\citep{tillet2019triton, spector2025thunderkittens}. We apply algorithm-design agents to design GPU kernels for 4 Level-1 cases highlighted in the original KernelBench paper~\citep{kernelbench}, covering distinct computational structures: transcendental-heavy elementwise computation, bandwidth-dominated pointwise computation, structured broadcasting, and multi-input reduction. We use the best speedup within the rollout budget as the metric. For RL methods, we test their inference-time learning ability by granting them the same rollout budget as in-context methods during online training. More detailed descriptions and experiment settings can be found in Appendix~\ref{appendix-kd}. As demonstrated in Table~\ref{tab:kernelbench-benchmark}, PCPO achieves the best average rank, which shows the effectiveness of parametric self-evolution at inference time.

\section{Conclusion}
In this paper, we develop self-evolving algorithm-design agents that evolve via exploring and training on self-generated algorithms with population experiences. Our proposed PCPO improves results across two algorithm-design tasks (i.e., GP in EDA and GPU kernel design) over state-of-the-art in-context evolutionary methods (e.g., OpenEvolve and ShinkaEvolve) remains competitive with frontier closed-source models based on Qwen3-8B, and reduces inference-time token cost, showing the benefit of population-curated parametric self-evolution for algorithm-design agents rather than stagnating with in-context evolution.
\newpage

\bibliographystyle{plain}
\bibliography{references}

@article{lu2026advancing,
  title={Advancing automated algorithm design via evolutionary stagewise design with LLMs},
  author={Lu, Chen and Xue, Ke and Gao, Chengrui and Shi, Yunqi and Xu, Siyuan and Yuan, Mingxuan and Qian, Chao and Zhou, Zhi-Hua},
  journal={arXiv:2603.07970},
  year={2026}
}

@inproceedings{IBM-offpolicy,
  title={Revisiting group relative policy optimization: Insights into on-policy and off-policy training},
  author={Mroueh, Youssef and Dupuis, Nicolas and Belgodere, Brian and Nitsure, Apoorva and Rigotti, Mattia and Greenewald, Kristjan and Navratil, Jiri and Ross, Jarret and Rios, Jesus},
  booktitle={Proceedings of the 14th International Conference on Learning Representations},
  address = {Rio de Janeiro, Brazil},
  year={2026}
}

@article{Meta-offpolicy,
  title={Efficient {RL} training for {LLM}s with experience replay},
  author={Arnal, Charles and Cabannes, Vivien and Cohen, Taco and Kempe, Julia and Munos, Remi},
  journal={arXiv:2604.08706},
  year={2026}
}

@inproceedings{
kernelbench,
title={KernelBench: Can {LLM}s Write Efficient {GPU} Kernels?},
author={Anne Ouyang and Simon Guo and Simran Arora and Alex L Zhang and William Hu and Christopher Re and Azalia Mirhoseini},
booktitle={Proceedings of the 42nd International Conference on Machine Learning},
address = {Vancouver, Canada},
year={2025},
}

@article{yang2025qwen3,
  title={Qwen3 technical report},
  author={Yang, An and Li, Anfeng and Yang, Baosong and Zhang, Beichen and Hui, Binyuan and Zheng, Bo and Yu, Bowen and Gao, Chang and Huang, Chengen and Lv, Chenxu and others},
  journal={arXiv:2505.09388},
  year={2025}
}

@inproceedings{lange2026shinkaevolve,
  title={Shinkaevolve: Towards open-ended and sample-efficient program evolution},
  author={Lange, Robert and Imajuku, Yuki and Cetin, Edoardo},
  booktitle={Proceedings of the 14th International Conference on Learning Representations},
  address = {Rio de Janeiro, Brazil},
  year={2026}
}

@inproceedings{tillet2019triton,
  title={Triton: an intermediate language and compiler for tiled neural network computations},
  author={Tillet, Philippe and Kung, Hsiang-Tsung and Cox, David},
  booktitle={Proceedings of the 3rd ACM SIGPLAN International Workshop on Machine Learning and Programming Languages},
  year={2019},
  address = {Phoenix, AZ}
}

@inproceedings{spector2025thunderkittens,
  title={Thunderkittens: Simple, fast, and adorable kernels},
  author={Spector, Benjamin and Arora, Simran and Singhal, Aaryan and Parthasarathy, Arjun and Fu, Dan and Re, Christopher},
  booktitle={Proceedings of the 13th International Conference on Learning Representations},
  year={2025},
  address = {Singapore}
}

@article{wang2024agentsurvey,
  title={A survey on large language model based autonomous agents},
  author={Wang, Lei and Ma, Chen and Feng, Xueyang and Zhang, Zeyu and Yang, Hao and Zhang, Jingsen and Chen, Zhiyuan and Tang, Jiakai and Chen, Xu and Lin, Yankai and others},
  journal={Frontiers of Computer Science},
  volume={18},
  number={6},
  pages={186345},
  year={2024},
  publisher={Springer}
}

@article{xi2025agentsurvey,
  title={The rise and potential of large language model based agents: A survey},
  author={Xi, Zhiheng and Chen, Wenxiang and Guo, Xin and He, Wei and Ding, Yiwen and Hong, Boyang and Zhang, Ming and Wang, Junzhe and Jin, Senjie and Zhou, Enyu and others},
  journal={Science China information sciences},
  volume={68},
  number={2},
  pages={121101},
  year={2025},
  publisher={Springer}
}

@article{ttt-discover,
  title={Learning to discover at test time},
  author={Yuksekgonul, Mert and Koceja, Daniel and Li, Xinhao and Bianchi, Federico and McCaleb, Jed and Wang, Xiaolong and Kautz, Jan and Choi, Yejin and Zou, James and Guestrin, Carlos and others},
  journal={arXiv:2601.16175},
  year={2026}
}

@inproceedings{wang2026thetaevolve,
title={ThetaEvolve: Test-time learning on open problems},
author={Yiping Wang and Shao-Rong Su and Zhiyuan Zeng and Eva Xu and Liliang Ren and Xinyu Yang and Zeyi Huang and Xuehai He and Luyao Ma and Baolin Peng and Hao Cheng and Pengcheng He and Weizhu Chen and Shuohang Wang and Simon Shaolei Du and Yelong Shen},
booktitle={Proceedings of the 43rd International Conference on Machine Learning},
year={2026},
address={Seoul, South Korea}
}

@inproceedings{jitrl,
title={Just-In-Time Reinforcement Learning: Continual Learning in {LLM} Agents Without Gradient Updates},
author={Li, Yibo and Lin, Zijie and Deng, Ailin and Zhang, Xuan and He, Yufei and Ji, Shuo and Cao, Tri and Hooi, Bryan},
booktitle={Proceedings of the 43rd International Conference on Machine Learning},
year={2026},
address={Seoul, South Korea}
}

@inproceedings{
zhang2026selfevolving-offline-rl,
title={Self-evolving {LLM} agents with in-distribution Optimization},
author={Yudi Zhang and Meng Fang and Zhenfang Chen and Mykola Pechenizkiy},
booktitle={Proceedings of the 43rd International Conference on Machine Learning},
year={2026},
address={Seoul, South Korea}
}

@inproceedings{
zhang2026memevolve,
title={MemEvolve: Meta-evolution of agent memory systems},
author={Guibin Zhang and Haotian Ren and Chong Zhan and  Zhenhong Zhou and Junhao Wang and He Zhu and Wangchunshu Zhou and Shuicheng Yan},
booktitle={Proceedings of the 43rd International Conference on Machine Learning},
year={2026},
address={Seoul, South Korea}
}

@article{team2026gemma,
  title={Gemma 4 technical report},
  author={Team, Gemma and Abd, Sherif El and Aggarwal, Vaibhav and Algayres, Robin and Andreev, Alek and Bachem, Olivier and Ballantyne, Ian and Brick, Cormac and C{\u{a}}rbune, Victor and Casbon, Michelle and others},
  journal={arXiv:2607.02770},
  year={2026}
}

@article{self-evolving-survey-1,
  title={A comprehensive survey of self-evolving {AI} agents: A new paradigm bridging foundation models and lifelong agentic systems},
  author={Fang, Jinyuan and Peng, Yanwen and Zhang, Xi and Wang, Yingxu and Yi, Xinhao and Zhang, Guibin and Xu, Yi and Wu, Bin and Liu, Siwei and Li, Zihao and others},
  journal={arXiv:2508.07407},
  year={2025}
}

@article{self-evolving-survey-2,
title={A survey of self-evolving agents: What, when, how, and where to evolve on the path to artificial super intelligence},
author={Gao, Huan-ang and Geng, Jiayi and Hua, Wenyue and Hu, Mengkang and Juan, Xinzhe and Liu, Hongzhang and Liu, Shilong and Qiu, Jiahao and Qi, Xuan and Wu, Yiran and others},
volume = {2026-January},
journal = {Transactions on Machine Learning Research},
publisher = {Transactions on Machine Learning Research},
year = {2026}
}

@inproceedings{Domain-Knowledge-Injection,
    title = "Structure-aware Domain Knowledge Injection for Large Language Models",
    author = "Liu, Kai  and
      Chen, Ze  and
      Fu, Zhihang  and
      Zhang, Wei  and
      Jiang, Rongxin  and
      Zhou, Fan  and
      Chen, Yaowu  and
      Wu, Yue  and
      Ye, Jieping",
    booktitle = "Proceedings of the 63rd Annual Meeting of the Association for Computational Linguistics",
    year = "2025",
    address = "Vienna, Austria"
}

@inproceedings{guo-etal-2025-efficient-domain,
    title = "Efficient Domain Continual pretraining by Mitigating the Stability Gap",
    author = "Guo, Yiduo  and
      Fu, Jie  and
      Zhang, Huishuai  and
      Zhao, Dongyan",
    booktitle = "Proceedings of the 63rd Annual Meeting of the Association for Computational Linguistics",
    year = {2025},
    address = "Vienna, Austria"
}

@inproceedings{Domain-adaptive-Post-training,
    title = "Demystifying Domain-adaptive Post-training for Financial {LLM}s",
    author = "Ke, Zixuan  and
      Ming, Yifei  and
      Nguyen, Xuan-Phi  and
      Xiong, Caiming  and
      Joty, Shafiq",
    booktitle = "Proceedings of the 30th Conference on Empirical Methods in Natural Language Processing",
    year = "2025",
    address = "Suzhou, China"
}

@inproceedings{sft,
 author={Ouyang, Long and Wu, Jeffrey and Jiang, Xu and Almeida, Diogo and Wainwright, Carroll and Mishkin, Pamela and Zhang, Chong and Agarwal, Sandhini and Slama, Katarina and Ray, Alex and others},
 booktitle = {Advances in Neural Information Processing Systems 36},
 title = {Training language models to follow instructions with human feedback},
 year = {2022},
 address = {New Orleans, LA}
}

@article{sft-survey,
  title={Instruction tuning for large language models: A survey},
  author={Zhang, Shengyu and Dong, Linfeng and Li, Xiaoya and Zhang, Sen and Sun, Xiaofei and Wang, Shuhe and Li, Jiwei and Hu, Runyi and Zhang, Tianwei and Wang, Guoyin and others},
  journal={ACM Computing Surveys},
  volume={58},
  number={7},
  pages={1--36},
  year={2026},
  publisher={ACM New York, NY}
}

@InProceedings{sft-rl,
  title = 	 {{SFT} Memorizes, {RL} Generalizes: A Comparative Study of Foundation Model Post-training},
  author =       {Chu, Tianzhe and Zhai, Yuexiang and Yang, Jihan and Tong, Shengbang and Xie, Saining and Schuurmans, Dale and Le, Quoc V and Levine, Sergey and Ma, Yi},
  booktitle = 	 {Proceedings of the 42nd International Conference on Machine Learning},
  year = 	 {2025},
  address = {Vancouver, Canada}
}

@article{chen2026maxproof,
  title={MaxProof: Scaling Mathematical Proof with Generative-Verifier {RL} and Population-Level Test-Time Scaling},
  author={Chen, Jiacheng and Zhang, Xinyu and Zhang, Shunkai and Wang, Yanmohan and Li, Lin and Qin, Tiancheng and Wang, Qin and Zhu, Zhengmao and Li, Tianle and Li, Jingyang and others},
  journal={arXiv:2606.13473},
  year={2026}
}

@inproceedings{sun2026rl-code,
  title={{RL} Grokking Recipe: How Does {RL} Unlock and Transfer New Algorithms in {LLM}s?},
  author={Sun, Yiyou and Cao, Yuhan and Huang, Pohao and Bai, Haoyue and Hajishirzi, Hanna and Dziri, Nouha and Song, Dawn},
  booktitle={Proceedings of the 14th International Conference on Learning Representations},
  address = {Rio de Janeiro, Brazil},
  year={2026}
}

@article{li2025repo,
  title={Repo: Replay-enhanced policy optimization},
  author={Li, Siheng and Zhou, Zhanhui and Lam, Wai and Yang, Chao and Lu, Chaochao},
  journal={arXiv:2506.09340},
  year={2025}
}

@inproceedings{sun2026cuda,
  title={Cuda-l1: Improving {CUDA} optimization via contrastive reinforcement learning},
  author={Li, Xiaoya and Sun, Xiaofei and Wang, Albert and Li, Jiwei and Shum, Chris},
  booktitle={Proceedings of the 14th International Conference on Learning Representations},
  address = {Rio de Janeiro, Brazil},
  year={2026}
}

@article{kernelbench-verified,
  title={KernelBench-Verified: Do {LLM}-Generated Kernels Actually Beat {P}y{T}orch?},
  author={Zhang, Yunxiang and Yu, Ping and Wang, Jianyu and Fan, Max (Xiangjun) and Reed, Julian and Mirhoseini, Azalia and Su, Will and others},
  journal={arXiv:2607.16241},
  year={2026}
}

@article{context-distillation,
  title={A general language assistant as a laboratory for alignment},
  author={Askell, Amanda and Bai, Yuntao and Chen, Anna and Drain, Dawn and Ganguli, Deep and Henighan, Tom and Jones, Andy and Joseph, Nicholas and Mann, Ben and DasSarma, Nova and others},
  journal={arXiv:2112.00861},
  year={2021}
}

@article{learning-by-distilling-context,
  title={Learning by distilling context},
  author={Snell, Charlie and Klein, Dan and Zhong, Ruiqi},
  journal={arXiv:2209.15189},
  year={2022}
}

@article{evo-plateaus,
  title={Scientific algorithm discovery by augmenting alphaevolve with deep research},
  author={Liu, Gang and Zhu, Yihan and Chen, Jie and Jiang, Meng},
  journal={arXiv:2510.06056},
  year={2025}
}

@article{alphaevolve,
  title={Alpha{E}volve: A coding agent for scientific and algorithmic discovery},
  author={Novikov, Alexander and V{\~u}, Ng{\^a}n and Eisenberger, Marvin and Dupont, Emilien and Huang, Po-Sen and Wagner, Adam Zsolt and Shirobokov, Sergey and Kozlovskii, Borislav and Ruiz, Francisco JR and Mehrabian, Abbas and others},
  journal={arXiv:2506.13131},
  year={2025}
}

@inproceedings{EoH,
title={Evolution of heuristics: Towards efficient automatic algorithm design using large language model},
author={Liu, Fei and Tong, Xialiang and Yuan, Mingxuan and Lin, Xi and Luo, Fu and Wang, Zhenkun and Lu, Zhichao and Zhang, Qingfu},
booktitle={Proceedings of the 41st International Conference on Machine Learning},
year={2024},
address={Vienna, Austria}
}

@inproceedings{ispd2005,
  title={The {ISPD}2005 placement contest and benchmark suite},
  author={Nam, Gi-Joon and Alpert, Charles J and Villarrubia, Paul and Winter, Bruce and Yildiz, Mehmet},
  booktitle={Proceedings of the 2005 International Symposium on Physical Design},
  year={2005},
  address = {San Francisco, CA}
}

@inproceedings{iccad2015,
  author       = {Myung{-}Chul Kim and
                  Jin Hu and
                  Jiajia Li and
                  Natarajan Viswanathan},
  title        = {{ICCAD-2015} {CAD} contest in incremental timing-driven placement and benchmark suite},
  booktitle    = {Proceedings of the 2015 International Conference on Computer-Aided Design},
address={Austin, TX},
  year         = {2015}
}

@inproceedings{ReEvo,
  title={Re{E}vo: Large language models as hyper-heuristics with reflective evolution},
  author={Ye, Haoran and Wang, Jiarui and Cao, Zhiguang and Berto, Federico and Hua, Chuanbo and Kim, Haeyeon and Park, Jinkyoo and Song, Guojie},
  booktitle={Advances in Neural Information Processing Systems 38},
  address={Vancouver, Canada},
  year={2024},
}

@article{FunSearch,
  title={Mathematical discoveries from program search with large language models},
  author={Romera-Paredes, Bernardino and Barekatain, Mohammadamin and Novikov, Alexander and Balog, Matej and Kumar, M Pawan and Dupont, Emilien and Ruiz, Francisco JR and Ellenberg, Jordan S and Wang, Pengming and Fawzi, Omar and others},
  journal={Nature},
  volume={625},
  number={7995},
  pages={468--475},
  year={2024},
  publisher={Nature Publishing Group UK London}
}

@incollection{chu2009placement,
  title={Placement},
  author={Chu, Chris},
  booktitle={Electronic Design Automation},
  pages={635--685},
  year={2009},
  publisher={Elsevier}
}

@inproceedings{markov2012progress,
  title={Progress and challenges in {VLSI} placement research},
  author={Markov, Igor L and Hu, Jin and Kim, Myung-Chul},
  booktitle={Proceedings of the 25th International Conference on Computer-Aided Design},
  year={2012},
  address={San Jose, CA},
}

@article{lin2020dreamplace,
  title={{DREAMP}lace: {D}eep learning toolkit-enabled gpu acceleration for modern {VLSI} placement},
  author={Lin, Yibo and Jiang, Zixuan and Gu, Jiaqi and Li, Wuxi and Dhar, Shounak and Ren, Haoxing and Khailany, Brucek and Pan, David Z},
  journal={IEEE Transactions on Computer-Aided Design of Integrated Circuits and Systems},
  volume={40},
  number={4},
  pages={748--761},
  year={2020},
}

@article{essential-issues-in-analytical,
  author       = {Yao{-}Wen Chang and
                  Zhe{-}Wei Jiang and
                  Tung{-}Chieh Chen},
  title        = {Essential issues in analytical placement algorithms},
  journal={IPSJ Transactions on System LSI Design Methodology},
  volume={2},
  pages={145--166},
  year={2009},
}

@article{chen2008ntuplace3,
  title={{NTU}place3: {A}n analytical placer for large-scale mixed-size designs with preplaced blocks and density constraints},
  author={Chen, Tung-Chieh and Jiang, Zhe-Wei and Hsu, Tien-Chang and Chen, Hsin-Chen and Chang, Yao-Wen},
  journal={IEEE Transactions on Computer-Aided Design of Integrated Circuits and Systems},
  volume={27},
  number={7},
  pages={1228--1240},
  year={2008},
  publisher={IEEE}
}

@article{cheng2018replace,
  title={{R}eplace: {A}dvancing solution quality and routability validation in global placement},
  author={Cheng, Chung-Kuan and Kahng, Andrew B and Kang, Ilgweon and Wang, Lutong},
  journal={IEEE Transactions on Computer-Aided Design of Integrated Circuits and Systems},
  volume={38},
  number={9},
  pages={1717--1730},
  year={2018},
}

@article{lu2015eplace,
  title={e{P}lace: {E}lectrostatics-based placement using fast {F}ourier transform and {N}esterov's method},
  author={Lu, Jingwei and Chen, Pengwen and Chang, Chin-Chih and Sha, Lu and Huang, Dennis Jen-Hsin and Teng, Chin-Chi and Cheng, Chung-Kuan},
  journal={ACM Transactions on Design Automation of Electronic Systems},
  volume={20},
  number={2},
  pages={1--34},
  year={2015},
}

@article{xplace,
  author={Liu, Lixin and Fu, Bangqi and Lin, Shiju and Liu, Jinwei and Young, Evangeline F. Y. and Wong, Martin D. F.},
  journal={IEEE Transactions on Computer-Aided Design of Integrated Circuits and Systems}, 
  title={Xplace: An extremely fast and extensible placement framework}, 
  year={2024},
  volume={43},
  number={6},
  pages={1872-1885},
}

@article{kingma2014adam,
  title={Adam: A method for stochastic optimization},
  author={Kingma, Diederik P and Ba, Jimmy},
  journal={arXiv:1412.6980},
  year={2014}
}

@article{deepseek-r1,
  title={Deepseek-{R}1: Incentivizing reasoning capability in {LLM}s via reinforcement learning},
  author={Guo, Daya and Yang, Dejian and Zhang, Haowei and Song, Junxiao and Zhang, Ruoyu and Xu, Runxin and Zhu, Qihao and Ma, Shirong and Wang, Peiyi and Bi, Xiao and others},
  journal={arXiv:2501.12948},
  year={2025}
}

@inproceedings{agnesina2023autodmp,
  title={Autodmp: Automated dreamplace-based macro placement},
  author={Agnesina, Anthony and Rajvanshi, Puranjay and Yang, Tian and Pradipta, Geraldo and Jiao, Austin and Keller, Ben and Khailany, Brucek and Ren, Haoxing},
  booktitle={Proceedings of the 2023 International Symposium on Physical Design},
  address={Virtual Event},
  year={2023}
}

@misc{whitepaper,
  title={Introduction to Agents},
  author={Blount, Alan and Gulli, Antonio and Saboo, Shubham and Zimmermann, Michael and Vuskovic, Vladimir},
  year={2025},
  publisher={Google Cloud},
  howpublished = {Whitepaper, Google},
  url = {https://www.kaggle.com/whitepaper-introduction-to-agents}
}

\newpage

\appendix
\setcounter{proposition}{0}
\setcounter{corollary}{0}

\section{Proof}\label{appendix-proof}
\begin{proposition}[Local likelihood transfer]
\label{lem:ic-spillover-proof}
Under Assumptions~\ref{assump:ic-edit-lipschitz}, for every
$i\in\{0,\ldots,n-1\}$,
\begin{equation}
    \ell_{\theta_{i+1}}(A_{i+1}\mid C)
    -\ell_{\theta_i}(A_{i+1}\mid C)
    \geq
    \epsilon_\ell-2\kappa d(A_i,A_{i+1})
    \geq
    \epsilon_\ell-2\kappa\Delta.
\end{equation}
Consequently, if $\rho_\ell\triangleq\epsilon_\ell-2\kappa\Delta>0$, then
\begin{equation}
    p_{\theta_{i+1}}(A_{i+1}\mid C)
    \geq
    e^{\rho_\ell}p_{\theta_i}(A_{i+1}\mid C).
\end{equation}
\end{proposition}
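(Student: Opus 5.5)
The plan is a telescoping argument. We write the quantity of interest as a sum of three differences, each controlled by one of the hypotheses. Fix $i\in\{0,\ldots,n-1\}$ and abbreviate $\ell_\theta(\cdot)=\ell_\theta(\cdot\mid C)$. The decomposition is
\begin{equation*}
\ell_{\theta_{i+1}}(A_{i+1})-\ell_{\theta_i}(A_{i+1})
=\bigl[\ell_{\theta_{i+1}}(A_{i+1})-\ell_{\theta_{i+1}}(A_i)\bigr]
+\bigl[\ell_{\theta_{i+1}}(A_i)-\ell_{\theta_i}(A_i)\bigr]
+\bigl[\ell_{\theta_i}(A_i)-\ell_{\theta_i}(A_{i+1})\bigr].
\end{equation*}
All terms are finite because the chain nodes are assumed to have positive probability under the policies considered.

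We then bound each bracket from below.

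\begin{itemize}
\item The middle bracket is the self-gain of the update $\theta_{i+1}=\mathcal{U}(\theta_i;A_i)$ on the visited node. By the definition of $\epsilon_\ell$ in Eq.~\ref{eq:ic-self-gain}, it is at least $\epsilon_\ell$.
\item The first bracket is a same-parameter gap, taken at $\theta=\theta_{i+1}$. By Assumption~\ref{assump:ic-edit-lipschitz}, its absolute value is at most $\kappa d(A_i,A_{i+1})$, so the bracket itself is at least $-\kappa d(A_i,A_{i+1})$.
\item The third bracket is the same kind of gap at $\theta=\theta_i$. The assumption is stated for both $\theta_i$ and $\theta_{i+1}$, so this bracket is also at least $-\kappa d(A_i,A_{i+1})$.
\end{itemize}

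Adding the three bounds gives the first inequality in Eq.~\ref{eq:ic-log-transfer}. The second inequality follows from the short-edit requirement $d(A_i,A_{i+1})\le\Delta$ in Eq.~\ref{eq:ic-edit-distance}, together with $\kappa>0$.

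For the multiplicative statement, suppose $\rho_\ell=\epsilon_\ell-2\kappa\Delta>0$. Then $\log p_{\theta_{i+1}}(A_{i+1}\mid C)-\log p_{\theta_i}(A_{i+1}\mid C)\ge\rho_\ell$. Exponentiating is valid because $\exp$ is monotone and both probabilities are positive, and it gives $p_{\theta_{i+1}}(A_{i+1}\mid C)\ge e^{\rho_\ell}p_{\theta_i}(A_{i+1}\mid C)$. The positivity condition on $\rho_\ell$ only matters for interpreting the bound as a genuine increase; the inequality holds for any value of $\rho_\ell$.

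There is no serious technical obstacle. The one point needing care is that closeness must be used at both parameter values. If Assumption~\ref{assump:ic-edit-lipschitz} held only at $\theta_i$, the first bracket would be uncontrolled. This double use is why the factor is $2\kappa$ rather than $\kappa$. I would also check that the sign convention in Eq.~\ref{eq:ic-self-gain} matches the bracket as written, namely new parameters minus old.
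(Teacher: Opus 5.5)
Your proof is correct and follows the paper's argument exactly. It uses the same three-bracket telescoping decomposition: the middle bracket is bounded below by the self-gain $\epsilon_\ell$, and the two same-parameter gaps at $\theta_{i+1}$ and $\theta_i$ are each bounded below by $-\kappa d(A_i,A_{i+1})$ via Assumption~\ref{assump:ic-edit-lipschitz}, followed by $d(A_i,A_{i+1})\le\Delta$ and exponentiation.
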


\begin{proof}
Let $\theta=\theta_i$ and $\theta'=\theta_{i+1}$. By Assumption~\ref{assump:ic-edit-lipschitz},
\begin{align}
&\ell_{\theta'}(A_{i+1}\mid C)-\ell_\theta(A_{i+1}\mid C) \nonumber\\
&=\bigl[\ell_{\theta'}(A_{i+1}\mid C)-\ell_{\theta'}(A_i\mid C)\bigr]
 +\bigl[\ell_{\theta'}(A_i\mid C)-\ell_\theta(A_i\mid C)\bigr] \nonumber\\
&\quad
 +\bigl[\ell_\theta(A_i\mid C)-\ell_\theta(A_{i+1}\mid C)\bigr] \nonumber\\
&\geq -\kappa d(A_i,A_{i+1})+\epsilon_\ell-\kappa d(A_i,A_{i+1}),
\end{align}
which proves Eq.~\ref{eq:ic-log-transfer}. Exponentiating both sides gives Eq.~\ref{eq:ic-multiplicative-transfer}.
\end{proof}

\section{Global Placement}

\begin{table}[t]
\centering
\caption{Hyperparameters of PCPO with Qwen3-8B on Global Placement.
Every \(T\) on-policy steps, we additionally run an off-policy update on the global-population top-\(k\) programs ranked by composite score \(S=\alpha\,\tilde{r}+\beta\,d\), where \(\tilde{r}\) is min--max normalized reward and \(d\) is semantic diversity.
Near-duplicates with diversity \(\le\delta_{\max}\) are merged unless the fitness gain is at least \(\gamma_{\min}\).}
\label{tab:hyperparameters}
\small
\setlength{\tabcolsep}{4pt}
\begin{tabular}{lcc}
\toprule
\textbf{Setting} & \textbf{Symbol / name} & \textbf{Value} \\
\midrule
\multicolumn{3}{l}{\textit{Reward and legality}} \\
Illegal / crash / missing API & --- & \(-10^{9}\) \\
Not achieving target overflow & --- & \(-1000\) \\
Valid placement & \(r\) & DREAMPlace \texttt{get\_fitness} (higher better) \\
Illegal threshold (crash band) & \(r_{\mathrm{inv}}\) & \(-10^{8}\) \\
\midrule
\multicolumn{3}{l}{\textit{PCPO}} \\
Off-policy interval & \(T\) & \(2\) \\
Population top-\(k\) & \(k\) & \(4\) \\
Composite reward weight & \(\alpha\) & \(1.0\) \\
Composite diversity weight & \(\beta\) & \(1.0\) \\
Near-duplicate diversity cutoff & \(\delta_{\max}\) & \(0.15\) \\
Min.\ fitness gain to replace near-dup. & \(\gamma_{\min}\) & \(0.01\) \\
Replay buffer size & --- & \(512\) \\
On-policy learning rate & \(lr_\text{on}\) & \(1\times 10^{-6}\) \\
Off-policy learning rate & \(lr_\text{off}\) & \(2\times 10^{-6}\) \\
\midrule
\multicolumn{3}{l}{\textit{Optimization}} \\
Train batch size (prompts) & \(B\) & \(4\) \\
Group size (GRPO) & \(n\) & \(4\) \\
PPO epochs & --- & \(1\) \\
KL coefficient (low-var.\ KL) & \(\beta_{\mathrm{KL}}\) & \(0.001\) \\
Clip ratio / entropy / grad.\ clip & --- & \(0.2\) / \(0.001\) / \(1.0\) \\
Training epochs / steps & --- & \(1\) / \(16\) \\
Dataset size & --- & \(64\) prompts \\
Max prompt / response tokens & --- & \(2048\) / \(2048\) \\
\midrule
\multicolumn{3}{l}{\textit{Decoding}} \\
Temperature / top-\(p\) & --- & \(0.5\) / \(1.0\)\\
Max new tokens & --- & \(1024\) \\
Qwen3 thinking mode & --- & off \\
\bottomrule
\end{tabular}
\end{table}

\subsection{Introduction to Global Placement}\label{appendix-gp-intro}
Global placement (GP) is a critical step of the overall chip placement procedure~\citep{chu2009placement}, which has a direct and significant impact on the performance, power consumption, and area (PPA) of the final chip design, as it determines the locations of the cells within the given chip layout area~\citep{markov2012progress}. GP aims to deliver a high‑quality rough overall layout for all cells and is required to meet the target overflow, a metric that measures a chip’s total overlap. It should also achieve the lowest possible half‑perimeter wirelength (HPWL) and run as fast as possible.~\citep{lin2020dreamplace}. For formulation, GP is typically defined as a constrained optimization problem. It uses HPWL as the objective and density as the constraint.: 
\begin{equation}
\begin{gathered}
    \underset{\textbf{x}, \textbf{y}}{\min}\text{ HPWL}(\textbf{x},\textbf{y})=\underset{\textbf{x}, \textbf{y}}{\min}\sum\nolimits_{e\in E}\text{HPWL}_e(\textbf{x},\textbf{y}),\\
\text{s.t. } \text{D}(\textbf{x}, \textbf{y}) \le d_t,
\end{gathered}
\label{eq-gp-constrained}
\end{equation}
where $(\textbf{x}, \textbf{y})$ denotes the 2-D positions of all the cells, $E$ denotes the set of nets,  $\text{HPWL}_e(\textbf{x},\textbf{y})= (\text{max}_{i\in e} x_i - \text{min}_{i\in e} x_i) + (\text{max}_{i\in e} y_i - \text{min}_{i\in e} y_i)$ denotes the HPWL of each net $e$, $\text{D}(\textbf{x}, \textbf{y})$ denotes the density of each location of the layout, and $d_t$ denotes the target density. Note that a net covers multiple cells and represents how these cells connect with one another during the routing phase.

\textbf{Existing GP methods:} 
Analytical placement~\citep{essential-issues-in-analytical} represents one class of state‑of‑the‑art GP methods. Among these approaches, nonlinear placement~\citep{chen2008ntuplace3, lu2015eplace, cheng2018replace} delivers the best performance. Nonlinear placement generally addresses the constrained GP optimization problem defined in Eq.~(\ref{eq-gp-constrained}) using the penalty method. Specifically, it solves a series of Lagrangian‑relaxation problems with progressively increasing Lagrangian multipliers, as formulated below:
\begin{equation}
    \underset{\textbf{x}, \textbf{y}}{\text{min}}\;\left(\sum\nolimits_{e\in E} \text{WL}_e(\textbf{x},\textbf{y})\right) + \lambda \cdot  \text{D}(\textbf{x}, \textbf{y}),
    \label{target}
\end{equation}
where $\text{WL}_e(\textbf{x},\textbf{y})$ is a smooth approximation of $\text{HPWL}_e(\textbf{x},\textbf{y})$, and $\lambda$ is the Lagrangian multiplier (also known as density weight), which controls the weight of the density penalty function $\text{D}(\textbf{x}, \textbf{y})$. To achieve smoother optimization, ePlace~\citep{lu2015eplace} and RePlace~\citep{cheng2018replace} calculate a density penalty function analogous to the potential energy in an electrostatic system. In this model, cells act as charges, and the density gradient represents the electric field. This penalty function is smoother than the original density function and maintains strong correlation with it.

Despite their outstanding performance, nonlinear methods are typically time‑consuming. For this reason, they cannot satisfy the fast‑verification requirements of industrial scenarios, particularly for large‑scale chip cases~\citep{lin2020dreamplace}. To address this issue, acceleration algorithms such as DREAMPlace~\citep{lin2020dreamplace} and Xplace~\citep{xplace} have been developed. These algorithms leverage modern GPUs and deep‑learning toolkits to speed up the gradient‑descent optimization process. Specifically, DREAMPlace~\citep{lin2020dreamplace} incorporates modern gradient‑descent optimizers (e.g., Adam, Nesterov, etc.) and adapts neural‑network training techniques to placement tasks. It delivers strong performance and stands as one of the most widely used and advanced open‑source placers available today. However, as shown empirically, the learning rate schedules have a huge impact on the quality and convergence of the GP process and are hard to tune~\citep{agnesina2023autodmp}. Designing the learning rate schedules requires OOD domain knowledge since they need to balance the wire-length and cell density via changing the positions of cells through gradient backpropagation~\citep{lin2020dreamplace}. For the ISPD 2005 benchmark, the target overflow is 0.07; for the ICCAD 2015 benchmark, the target overflow is 0.1.

%
%

\definecolor{cotbg}{HTML}{F7F4EE}
\definecolor{cotframe}{HTML}{5C4A32}
\definecolor{h1c}{HTML}{B42318}
\definecolor{h2c}{HTML}{B54708}
\definecolor{h3c}{HTML}{6927DA}
\definecolor{okc}{HTML}{175CD3}

\newcommand{\Hmark}[2]{%
  \textcolor{#1}{\textbf{[#2]}}%
}

\lstdefinestyle{pytiny}{
  language=Python,
  basicstyle=\ttfamily\fontsize{7.2}{8.8}\selectfont,
  keywordstyle=\bfseries,
  commentstyle=\itshape\color{black!45},
  showstringspaces=false,
  breaklines=true,
  breakatwhitespace=true,
  columns=fullflexible,
  keepspaces=true,
  tabsize=4,
  frame=single,
  rulecolor=\color{black!25},
  framerule=0.3pt,
  xleftmargin=3pt,
  xrightmargin=3pt,
  aboveskip=3pt,
  belowskip=2pt,
  morecomment=[l]{\#},
}

\begin{figure*}[t]
\centering

\begin{tcolorbox}[
  enhanced,
  width=\textwidth,
  colback=cotbg,
  colframe=cotframe,
  boxrule=0.6pt,
  arc=2pt,
  left=6pt, right=6pt, top=5pt, bottom=5pt,
  toptitle=2pt, bottomtitle=2pt,
  fonttitle=\sffamily\bfseries\footnotesize,
  coltitle=white,
  colbacktitle=cotframe,
  title={Qwen3-8B chain-of-thought (OpenEvolve on \texttt{bigblue1}, parent fitness $-1000$)},
  borderline west={3.2pt}{0pt}{h1c},
]
\sffamily\fontsize{8}{10.2}\selectfont
\textit{The current program is using a baseline exponential decay \ldots But the fitness is still $-1000$, which means the overflow is above $0.07$.}

\textit{Looking at the previous attempts, \Hmark{h1c}{H1} the problem seems to be that the learning rate isn't decreasing enough when overflow is high.}
\textit{The current code reduces the learning rate by $0.99$ when overflow is above $0.07$, but maybe that's not enough.}
\textit{\ldots multiplying by $0.95$ instead of $0.99$ when overflow is too high. That would make the learning rate drop faster, which might help in reducing overflow more quickly.}

\vspace{3pt}
\textit{Wait, the \texttt{log\_hpwl\_prev} and \texttt{log\_hpwl} could indicate if the hpwl is improving.}
\textit{\Hmark{h2c}{H2} If \texttt{log\_hpwl} is decreasing (meaning hpwl is getting worse), maybe we need to adjust the learning rate.}
\textit{But in the current code, if \texttt{log\_hpwl} is less than \texttt{log\_hpwl\_prev}, it's considered an improvement, so it increases the learning rate.}

\vspace{3pt}
\textit{\Hmark{h3c}{H3} if \texttt{log\_gradient\_norm} $>$ \texttt{log\_gradient\_norm\_prev}: reduce the learning rate.}
\textit{Wait, but \texttt{log\_gradient\_norm\_prev} is not a parameter. \ldots So, perhaps using the current \texttt{log\_gradient\_norm} as a signal. For example, if \texttt{log\_gradient\_norm} is higher than some threshold, reduce the learning rate.}

\vspace{4pt}
{\scriptsize\color{black!55}
H1:~inverted causality (overflow stays high $\Rightarrow$ decay even faster, compounding on \texttt{learning\_rate\_prev}).
\quad
H2:~inverted semantics ($\downarrow\log\mathrm{HPWL}$ means HPWL \emph{improves}, not worsens).
\quad
H3:~phantom variable (the API has no previous gradient norm).}
\end{tcolorbox}

\vspace{5pt}

\begin{minipage}[t]{0.48\textwidth}
\centering
\textbf{\small (b)~OpenEvolve child of (a)}\par
{\scriptsize fitness $-1000$; final overflow $0.86$ (illegal)}
\begin{lstlisting}[style=pytiny]
def adjust_learning_rate(
    init_learning_rate, step_num, log_hpwl,
    log_hpwl_prev, overflow, log_lambda,
    learning_rate_prev, log_gradient_norm):
    lr = learning_rate_prev * 0.995
    if overflow > 0.07:
        lr *= 0.95          # extra compounding decay
    elif log_hpwl < log_hpwl_prev:
        lr *= 1.002         # HPWL down => raise LR
    elif log_gradient_norm > 5.0:
        lr *= 0.99          # invented threshold
    else:
        lr *= 0.995
    return lr
\end{lstlisting}
\end{minipage}
\hfill
\begin{minipage}[t]{0.50\textwidth}
\centering
\textbf{\small (c)~Direct sample (same task; best of 64)}\par
{\scriptsize fitness $-87.26$; final overflow $0.068$ (legal)}
\begin{lstlisting}[style=pytiny]
def adjust_learning_rate(
    init_learning_rate, step_num, log_hpwl,
    log_hpwl_prev, overflow, log_lambda,
    learning_rate_prev, log_gradient_norm):
    decay = 0.995 ** step_num          # scheduled, not recursive
    hpwl_f = 1.0 + 0.5 * (log_hpwl - log_hpwl_prev)
    ov_f   = 1.0 - 0.5 * max(overflow - 0.07, 0.0)
    g_f    = 1.0 / (1.0 + 0.5 * np.exp(log_gradient_norm))
    lam_f  = 1.0 / (1.0 + 0.2 * np.exp(log_lambda))
    return (init_learning_rate * decay
            * hpwl_f * ov_f * g_f * lam_f)
\end{lstlisting}
\end{minipage}

\caption{A representative OpenEvolve thinking trace on \texttt{bigblue1} versus a strong direct generation based on Qwen3-8B.
Panel~(a) is lightly abridged from the model's \texttt{<think>} block;
highlighted claims H1--H3 are factually false or internally inconsistent.
Panel~(b) is the program emitted from that trace: extra multiplicative decay on \texttt{learning\_rate\_prev} while overflow exceeds $0.07$ compounds into $\eta_t=\eta_0(0.995\cdot 0.95)^t$, the optimizer stalls, and overflow never reaches the $0.07$ target.
Panel~(c) reconstructs the expert schedule $\eta_0\cdot 0.995^{t}$ and applies non-compounding state-dependent modulators; HPWL improving (\texttt{log\_hpwl}${}<{}$\texttt{log\_hpwl\_prev}) \emph{shrinks} the step, the opposite of~(b).
Ellipses in~(a) mark omitted sentences; wording is otherwise verbatim.}
\label{fig:bigblue1-cot-vs-direct}
\end{figure*}


\begin{table}[t]
\centering
\caption{Recurring errors in OpenEvolve chain-of-thought on \texttt{bigblue1}.
``Direct'' denotes the best sample produced by the base model directly from the same task (Fig.~\ref{fig:bigblue1-cot-vs-direct}c).}
\label{tab:bigblue1-cot-errors}
\small
\setlength{\tabcolsep}{3.5pt}
\begin{tabular}{llp{0.46\columnwidth}}
\toprule
ID & Type & Typical claim in \texttt{<think>} vs.\ the working policy \\
\midrule
H1 & Causal inversion
  & Overflow remains $\approx 0.65$ $\Rightarrow$ ``LR is not small enough'';
    extra $\times 0.99/0.95/0.9$ on \texttt{learning\_rate\_prev}.
    Direct: non-compounding soft scale on $\eta_0\cdot 0.995^{t}$. \\
H2 & Semantic inversion
  & ``If \texttt{log\_hpwl} is decreasing (meaning hpwl is getting worse)''.
    $\log$ is monotone: a drop is an \emph{improvement}. \\
H3 & Phantom input
  & Invents \texttt{log\_gradient\_norm\_prev}, which is not in the API. \\
H4 & Incommensurable compare
  & \texttt{log\_gradient\_norm < log\_hpwl\_prev - 0.01} (iter.~11);
    gradient log vs.\ wirelength log. \\
H5 & Fitness polarity
  & Treats $-88.90$ / $-89.26$ as ``better than baseline'' $-88.56$
    (higher is better). \\
\bottomrule
\end{tabular}
\end{table}


\begin{figure}[t]
\centering
\begin{tcolorbox}[
  enhanced,
  width=\columnwidth,
  colback=cotbg,
  colframe=cotframe,
  boxrule=0.6pt,
  arc=2pt,
  left=5pt, right=5pt, top=4pt, bottom=4pt,
  fonttitle=\sffamily\bfseries\footnotesize,
  coltitle=white,
  colbacktitle=cotframe,
  title={Same run: fitness polarity + incommensurable compare},
  borderline west={3.2pt}{0pt}{h3c},
]
\sffamily\fontsize{8}{10}\selectfont
\textit{Program~3 (the current one) has a fitness of $-89.2598$, which is \Hmark{h1c}{H5} better than the baseline.}
\textit{The overflow is already under $0.07$, so maybe the learning rate is too low \ldots}

\vspace{3pt}
\textit{In Program~3 they added a gradient check:}
\textit{\Hmark{h3c}{H4} if the gradient norm is decreasing (\texttt{log\_gradient\_norm} $<$ \texttt{log\_hpwl\_prev} $-$ $0.01$), they increase the rate.}
\end{tcolorbox}

\vspace{3pt}
\begin{lstlisting}[style=pytiny, caption={Emitted body (fitness $-89.04$; legal but worse than baseline $-88.56$).},
                  label={lst:oe-iter11}]
lr = learning_rate_prev * 0.995
if overflow > 0.07:
    pass
else:
    if log_hpwl < log_hpwl_prev:
        lr *= 1.005
    else:
        lr *= 0.995
    # Increase LR slightly when gradient is decreasing
    if log_gradient_norm < log_hpwl_prev - 0.01:
        lr *= 1.002
return lr
\end{lstlisting}

\caption{A second OpenEvolve trace on the legal island.
H5 inverts fitness polarity ($-89.26$ is \emph{worse} than baseline $-88.56$).
H4 compares $\log\|\nabla\|$ to $\log\mathrm{HPWL}$, two unrelated quantities, then treats the predicate as ``gradient is decreasing.''
The model later notes that the previous gradient is unavailable, yet still emits the comparison.}
\label{fig:bigblue1-incommensurable}
\end{figure}

\subsection{Hyperparameters}
We provide the training hyperparameters of PCPO in Table~\ref{tab:hyperparameters}. All RL methods have the same rollout budget in the training process.

\subsection{Extended results}
\subsubsection{Practical examples for diagnostic experiments of in-context evolutionary stagnation} \label{appendix-stagnation}
On bigblue1, we inspect OpenEvolve's thinking traces with the base model, Qwen3-8B.
A representative excerpt is shown in Fig.~\ref{fig:bigblue1-cot-vs-direct}a.
The model does not merely mistune constants; it searches a
qualitatively inverted learning-rate policy relative to a strong
direct sample from the same model family (Fig.~\ref{fig:bigblue1-cot-vs-direct}c,
fitness $-87.26$, overflow $0.068$).

Two errors recur in OpenEvolve’s thinking traces (shown in Table~\ref{tab:bigblue1-cot-errors}).
\textbf{(H1)~Causal inversion:}
Residual overflow ($\approx 0.65$) is attributed to an insufficiently
small step size, so the child multiplies an extra $0.99$--$0.9$
onto \texttt{learning\_rate\_prev}.
Because the extra factor compounds, the realized schedule is
$\eta_t=\eta_0(0.995\cdot 0.95)^t$ rather than the expert
$\eta_0\cdot 0.995^{t}$; the placer stalls and every such child scores
the hard penalty $-1000$.
The direct program instead recomputes $\eta_0\cdot 0.995^{t}$
each step and applies a one-shot soft scale
$1-0.5\max(\mathrm{overflow}-0.07,0)$, which does not accumulate.
\textbf{(H2)~Semantic inversion.}
The trace states that a decreasing \texttt{log\_hpwl} means HPWL is "getting worse", contradicting monotonicity of $\log$ and the model's
own later clause that treats the same predicate as improvement. For (H3) Phantom input, although in this case the model realizes that \texttt{log\_gradient\_norm\_prev} is not a parameter and avoids using it, the model produces some invalid codes with the phantom input in other cases with a great tendency.

A second trace (Fig.~\ref{fig:bigblue1-incommensurable}) additionally
inverts fitness polarity ($-89.26$ described as ``better'' than the baseline $-88.56$) and compares incommensurable logs,
\texttt{log\_gradient\_norm < log\_hpwl\_prev - 0.01}.
Together these traces explain why evolution never recovers the direct
policy: the search is conducted inside a misleading reasoning, not around the correct schedule.

\begin{figure}[t]
\centering
\centering
\includegraphics[width=\textwidth]{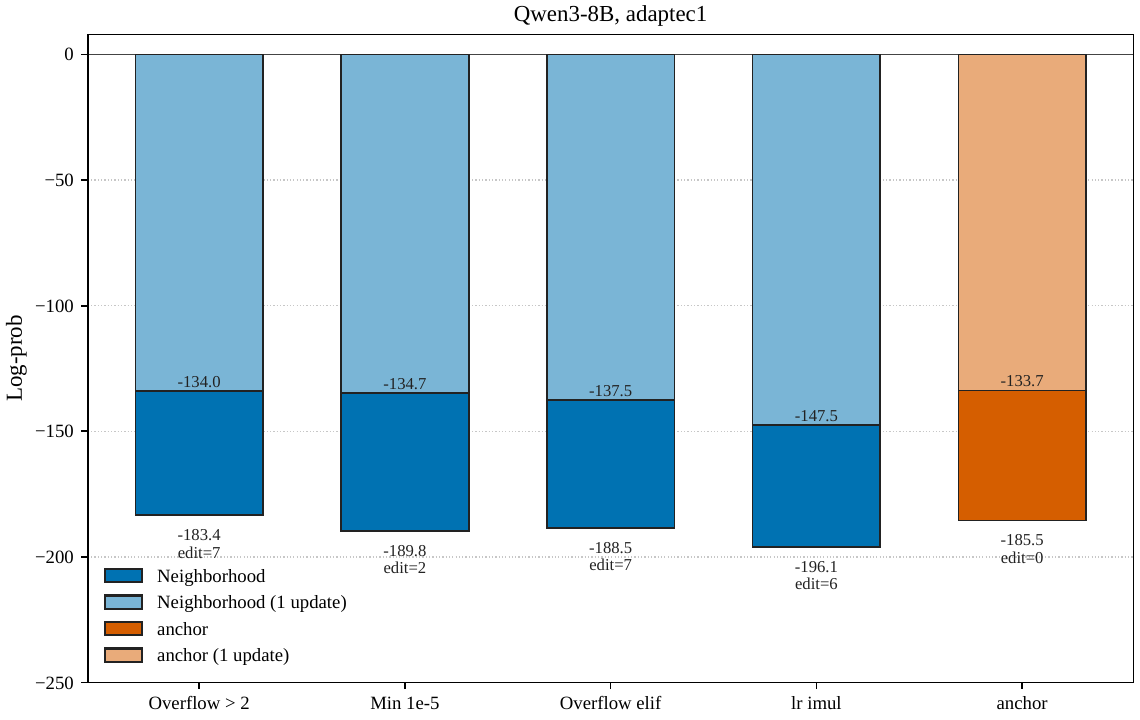} 
\caption{Joint log-probability assigned by Qwen3-8B to five global placement learning-rate policies under a shared prompt, before and after one training update step on the anchor program. The orange bar is the log-probability of the anchor produced by PCPO; blue bars are the log-probabilities of short-edit neighbors with respect to the anchor; bars with lighter colors show the log-probability after one training update step. Token-level edit distance from the anchor is annotated on each bar.}
\label{fig:logprob}
\end{figure}

\begin{figure}[t]
\centering
\centering
\includegraphics[width=\textwidth]{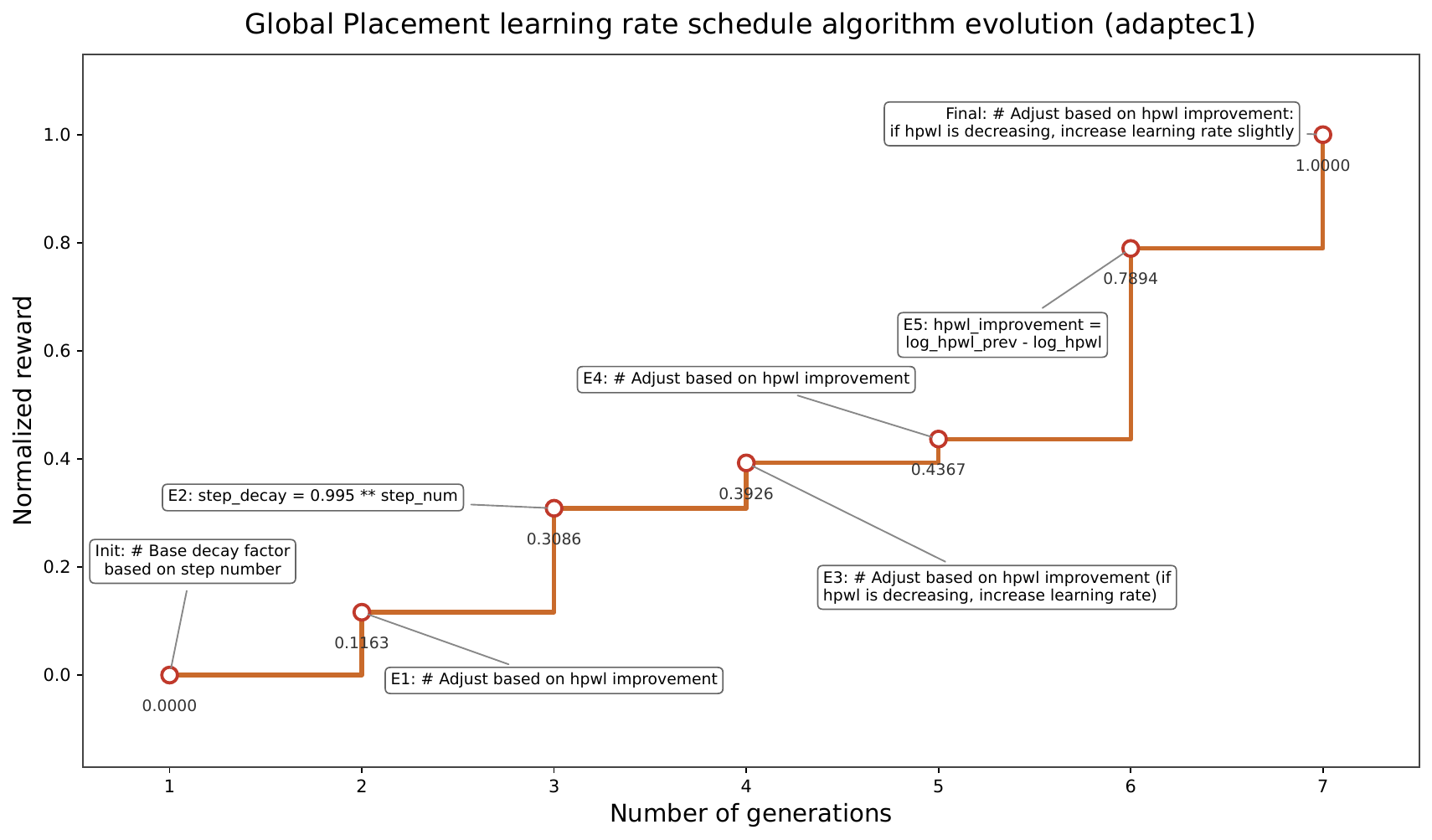} 
\caption{Detailed modifications and improvement of the global placement learning rate schedule algorithm evolution made by PCPO on adaptec1.}
\label{fig:evolution}
\end{figure}

\subsubsection{Practical examples for the Improvement Chain} \label{appendix-gp-case}
To provide a sense of the actual magnitude of the terms $\kappa\Delta$ and $\epsilon_\ell$, we first score Qwen3-8B on five complete adjust\_learning\_rate implementations for global placement (benchmark adaptec1). All candidates are under the same full task prompt. The anchor $A_i$ is a program produced by PCPO, and four neighborhood variants are obtained by short edits to this program. We also score Qwen3-8B on these programs after one training update step on the anchor program with a 2e-6 learning rate (the learning rate used for off-policy update) and an advantage of 1. In this case, we can set $\Delta=10$ since the largest edit distance here is 7. As shown in Figure~\ref{fig:logprob}, the log-probabilities of neighborhood variants remain in a neighborhood range, and thus we can set $\kappa=2.5$ because the maximum value of $\left|\ell_{\theta}(A_i\mid C)-\ell_{\theta}(A_{i+1}\mid C)\right|/d(A_i, A_{i+1})$ before and after the update step is 2.3 (i.e., $(147.5-133.7)/6=2.3$). The log-probability of the anchor program increases by 51.8 (from -185.5 to -133.7) after one training update step on it, and thus we can set $\epsilon_\ell=51$ here. The figure therefore provides a limited sanity check for Assumption~\ref{assump:ic-edit-lipschitz} and Proposition~\ref{lem:ic-spillover} at a single model checkpoint, showing that, in this case, $\rho_\ell=\epsilon_\ell-2\kappa\Delta=51-2\times2.5\times10>0$. 

Moreover, Figure~\ref{fig:evolution} shows the detailed modifications and the corresponding improvements of the global placement learning rate schedule algorithm evolution made by PCPO during the training process on adaptec1. We can observe that the modifications and improvements are consistent with the proposition of the Improvement Chain described in Section~\ref{IC}, where successively discovered algorithms differ through relatively small modifications and achieve progressively better rewards.

\begin{table}[t]
  \centering
  \caption{Average inference-time 64-rollout best HPWL ($\times 10^6$) of 4 independent runs on four global placement cases. Lower is better.}
  \label{tab:gp-inference-time}
  \small
  \begin{tabular}{lcccc|c}
    \toprule
    Method & adaptec1 & bigblue1 & superblue1 & superblue7 & Avg rank \\
    \midrule
    OpenEvolve   & 70.99          & 87.53          & 389.18          & 554.48          & 4.25 \\
    ShinkaEvolve & 71.10          & 88.29          & 389.18          & 551.10          & 4.63 \\
    JitRL        & 70.99          & 88.36          & 387.61          & 556.13          & 4.38 \\
    ThetaEvolve  & 70.68          & 87.83          & 388.79          & 562.56          & 4.25 \\
    GRPO         & 70.49          & 87.51          & 388.09          & 551.60          & 2.50 \\
    PCPO         & \textbf{70.04} & \textbf{87.23} & \textbf{385.64} & \textbf{547.11} & \textbf{1.00} \\
    \bottomrule
  \end{tabular}
\end{table}
\begin{table}[t]
  \centering
  \caption{Average inference-time 64-rollout best HPWL ($\times 10^6$) of 4 independent runs produced by PCPO versus Qwen3-8B equipped with a generalizable skill distilled by GPT-6-Astra from PCPO's training logs. Lower is better.}
  \label{tab:skill-vs-pcpo}
  \small
  \begin{tabular}{lccccc}
    \toprule
    Method & adaptec1 & bigblue1 & superblue1 & superblue7 & Average \\
    \midrule
    PCPO   & \textbf{70.04} & \textbf{87.23} & \textbf{385.64} & \textbf{547.11} & -- \\
    Qwen3-8B + Skill & 70.94          & 87.58          & 388.76          & 553.16& -- \\
    \midrule
    Relative\ gap (\%)   & $+1.29$        & $+0.40$        & $+0.81$        & $+1.11$        & $+0.90$ \\
    \bottomrule
  \end{tabular}\\[2pt]
\end{table}

\subsubsection{Online performance comparison}\label{appedic-inference-time-exp}
Table~\ref{tab:gp-inference-time} shows the online performance comparison of OpenEvolve, ShinkaEvolve, ThetaEvolve, GRPO, and PCPO. Since ThetaEvolve is a test-time learning method that enhances evolution by RL, we adopt PCPO and GRPO in the same setting with a total online training rollout budget of 64. For in-context evolutionary methods, OpenEvolve and ShinkaEvolve, they have the same rollout budget of 64. As shown in Table~\ref{tab:gp-inference-time}, PCPO achieves the best performance, outperforming ThetaEvolve and demonstrating the effectiveness of reusing high-quality population experiences rather than simply discarding each sample after one update via traditional RL. PCPO also beats in-context evolutionary methods in an online setting, dispelling doubts about unfairness in the training of PCPO in the main experiment. Note that ThetaEvolve does not make much improvement compared to OpenEvolve, which may be attributed to repeatedly putting flawed self-generated algorithms into prompts, reinforcing the low-quality generation pattern via RL described in Section~\ref{sec:in-context stagnation}. We also make comparisons to JitRL~\citep{jitrl}, a training-free framework that enables test-time policy optimization by estimating action advantages from the memory and directly modulating the LLM’s output logits without any gradient updates. As shown in Table~\ref{tab:gp-inference-time}, JitRL does not achieve great results since it does not drift the policy through gradient updates: only estimating action advantages from the memory still introduces the model's own bias. 

We also conduct an extra experiment to provide evidence for the effectiveness of parametric evolution over in-context instruction in Table~\ref{tab:skill-vs-pcpo}. We use the most frontier closed-source model, GPT-6-Astra, to analyze and distill a generalizable skill from the training logs of PCPO and equip Qwen3-8B with it for the same budget of 64-rollout generation. As shown in Table~\ref{tab:skill-vs-pcpo}, PCPO still outperforms the base model Qwen3-8B equipped with the skill distilled by GPT-6-Astra from the training logs of PCPO. It shows that even given the same amount of extra information obtained independently by PCPO, in-context methods (e.g., quipping the base model with skills) still cannot make full use of the experiences, stressing the importance of parametric self-evolution.

\begin{figure}[t]
\centering
\centering
\includegraphics[width=0.8\textwidth]{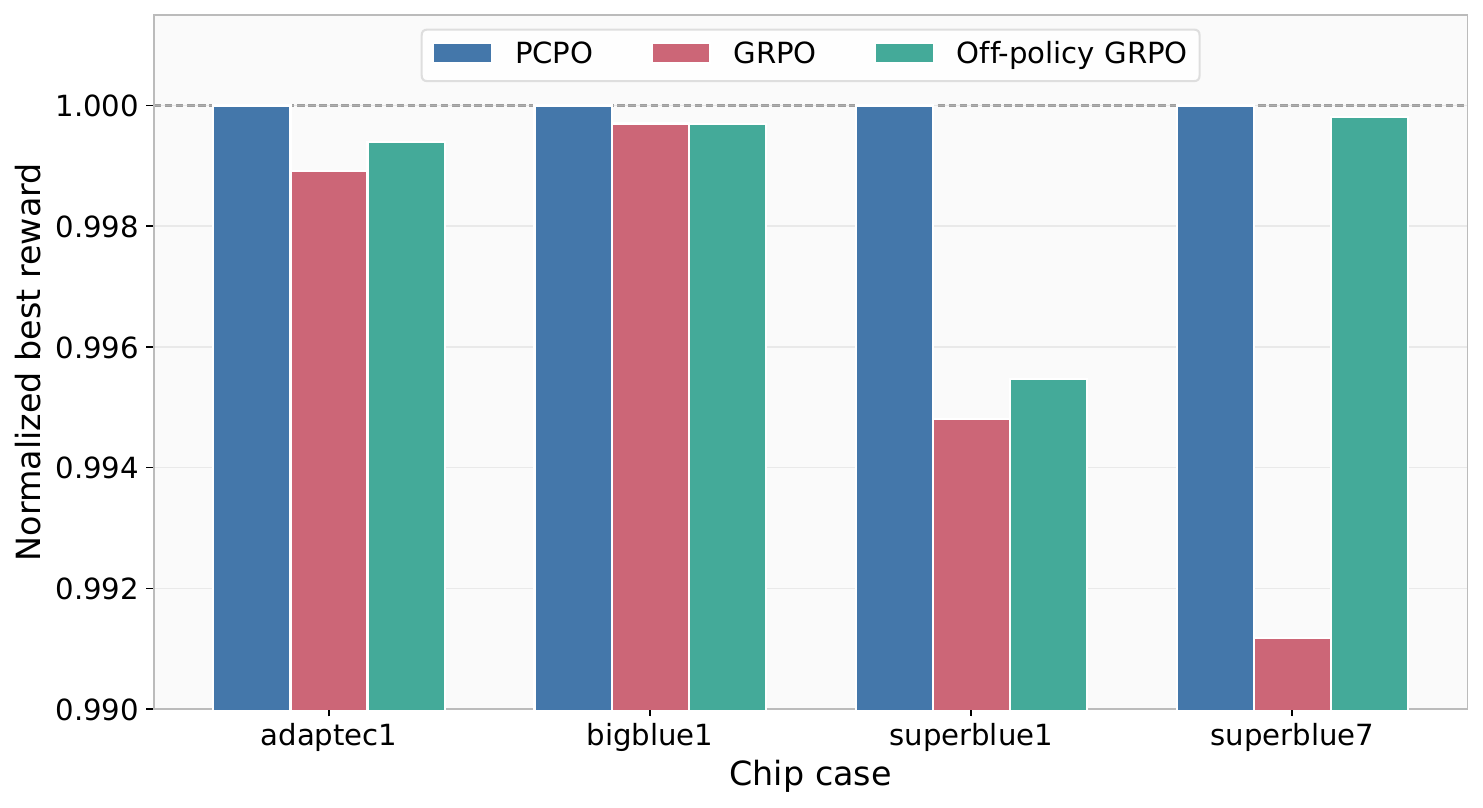} 
\caption{Ablation study on the training performance of the Regularized Off-policy Update and the Refreshing On-policy Update in the Hybrid Policy Update module of PCPO.}
\label{fig:ablation}
\end{figure}

\begin{figure}[t]
\centering
\centering
\includegraphics[width=\textwidth]{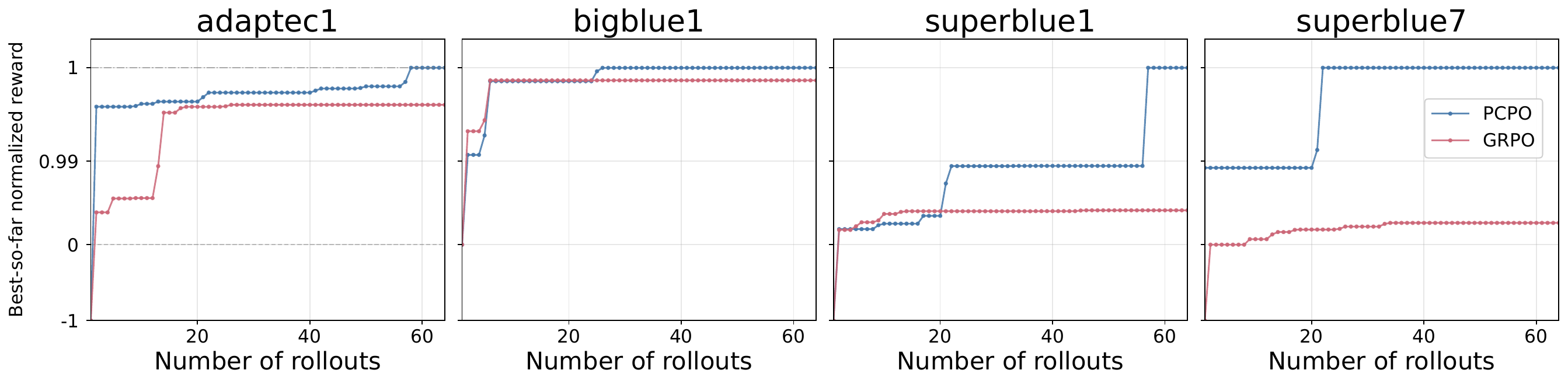} 
\caption{Best-so-far normalized reward along the training trajectories of PCPO and GRPO on 4 training cases.}
\label{fig:mix_training}
\end{figure}

\begin{table*}[t]
  \centering
  \caption{Ablation study of the Global Population: best@n HPWL ($\times 10^6$) results achieved by RePO, Random Replay, and PCPO. The lower, the better.}
  \label{tab:repo}
  \small
  \setlength{\tabcolsep}{3.5pt}
  \begin{adjustbox}{max width=\textwidth}
  \begin{tabular}{llccccccccc}
    \toprule
    Method & Metric
      & adaptec1 & adaptec2 & adaptec3 & adaptec4
      & bigblue1 & bigblue2 & bigblue3 & bigblue4 & Avg Rank \\
    \midrule
    \multirow{3}{*}{RePO}
      & best@1  & $\times$ & 81.39 & \textbf{186.76} & 170.45 & 143.39 & $\times$ & $\times$ & \textbf{737.62} & 2.19 \\
      & best@4  & 71.73 & 80.34 & 186.76 & 170.45 & 88.61 & \textbf{132.71} & 389.33 & \textbf{737.62} & 2.25 \\
      & best@16 & 71.04 & 79.90 & 186.76 & 169.63 & \textbf{87.50} & 132.34 & 309.41 & 733.31 & 2.13 \\
    \cmidrule(lr){1-11}
    \multirow{3}{*}{Random Replay}
      & best@1  & $\times$ & 80.14 & 187.71 & \textbf{169.45} & 110.71 & $\times$ & 332.25 & $\times$ & 2.06 \\
      & best@4  & 71.20 & 80.14 & 187.16 & \textbf{169.45} & 88.46 & 133.08 & \textbf{326.29} & 755.16 & 2.13 \\
      & best@16 & 71.11 & 80.12 & 187.16 & 168.29 & 88.46 & 132.84 & 325.73 & 729.88 & 2.75 \\
    \cmidrule(lr){1-11}
    \multirow{3}{*}{PCPO}
      & best@1  & \textbf{70.80} & \textbf{80.12} & 187.82 & 171.40 & \textbf{88.59} & $\times$ & \textbf{328.14} & 755.41 & \textbf{1.75} \\
      & best@4  & \textbf{70.36} & \textbf{78.83} & \textbf{186.40} & 171.40 & \textbf{88.06} & 132.93 & 326.58 & 748.17 & \textbf{1.63} \\
      & best@16 & \textbf{70.15} & \textbf{78.83} & \textbf{186.25} & \textbf{168.26} & 87.65 & \textbf{132.28} & \textbf{303.11} & \textbf{727.46} & \textbf{1.13} \\
    \bottomrule
  \end{tabular}
  \end{adjustbox}

  \vspace{2mm}

  \begin{adjustbox}{max width=\textwidth}
  \begin{tabular}{llccccccccc}
    \toprule
    Method & Metric
      & superblue1 & superblue3 & superblue4 & superblue5
      & superblue7 & superblue10 & superblue16 & superblue18 & Avg Rank \\
    \midrule
    \multirow{3}{*}{RePO}
      & best@1  & 398.20 & $\times$ & 293.76 & 459.24 & $\times$ & 865.78 & \textbf{401.21} & $\times$ & 2.50 \\
      & best@4  & 389.03 & 465.33 & 292.03 & 452.65 & 602.12 & 863.00 & 401.18 & 224.32 & 2.50 \\
      & best@16 & 387.76 & 460.43 & 290.29 & 451.82 & 549.79 & 858.22 & 400.94 & 223.96 & 2.63 \\
    \cmidrule(lr){1-11}
    \multirow{3}{*}{Random Replay}
      & best@1  & $\times$ & \textbf{458.95} & 292.13 & $\times$ & 613.61 & 861.91 & 401.64 & 231.89 & 2.25 \\
      & best@4  & 389.08 & \textbf{458.95} & 292.13 & \textbf{450.05} & 557.06 & 861.91 & 401.64 & 224.35 & 2.25 \\
      & best@16 & 388.24 & 458.94 & 291.56 & \textbf{450.05} & 553.28 & 857.01 & 400.70 & 223.40 & 2.25 \\
    \cmidrule(lr){1-11}
    \multirow{3}{*}{PCPO}
      & best@1  & \textbf{389.67} & 463.14 & \textbf{291.58} & \textbf{457.61} & \textbf{551.07} & \textbf{860.22} & 401.59 & \textbf{223.47} & \textbf{1.25} \\
      & best@4  & \textbf{388.95} & 461.36 & \textbf{291.58} & 452.20 & \textbf{549.45} & \textbf{860.22} & \textbf{400.73} & \textbf{223.47} & \textbf{1.25} \\
      & best@16 & \textbf{385.68} & \textbf{455.29} & \textbf{289.79} & 450.76 & \textbf{547.43} & \textbf{856.73} & \textbf{400.31} & \textbf{222.82} & \textbf{1.13} \\
    \bottomrule
  \end{tabular}
  \end{adjustbox}
\end{table*}

\begin{figure}[t]
\centering
\centering
\includegraphics[width=\textwidth]{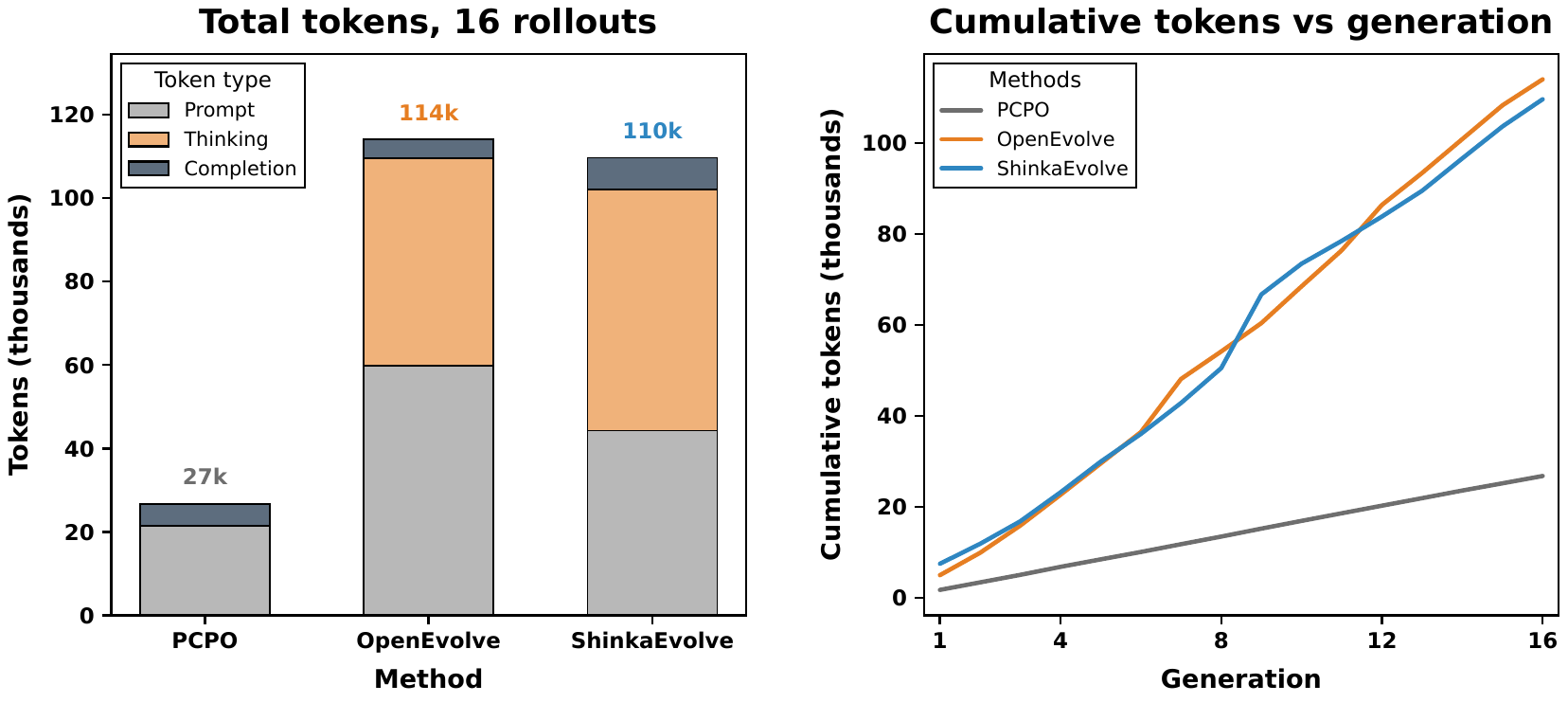} 
\caption{Inference-time token cost statistics of PCPO and two evolutionary methods.}
\label{fig:token}
\end{figure}

\subsubsection{Ablation}\label{appendix-gp-ablation}
We first conduct ablation experiments on the Regularized Off-policy Update and the Refreshing On-policy Update in the Hybrid Policy Update module of PCPO. GRPO~\citep{deepseek-r1} can be seen as the RL method that only adopts the Refreshing On-policy Update, and we also test the performance of PCPO without the Refreshing On-policy Update, named off-policy GRPO. The training result comparison is shown in Figure~\ref{fig:ablation}. We can observe that the training performance degradation occurs in GRPO and off-policy GRPO, which provides evidence for the effectiveness of the Regularized Off-policy Update that utilizes elite experiences for sample-efficient training and the effectiveness of the Refreshing On-policy Update that enhances exploration. We also show the sample efficiency of the mixed-case training process of PCPO compared to GRPO in the form of best-so-far normalized reward curves in Figure~\ref{fig:mix_training}, intuitively demonstrating that the Regularized Off-policy Update efficiently utilizes the population experiences for better overall performance. 

We also conduct experiments to show the effectiveness of the Global Population module of PCPO. As shown in Table~\ref{tab:repo}, PCPO outperforms RePO~\citep{li2025repo}, an RL method that makes the hybrid objective explicit, adding an off-policy GRPO term over historical outputs retrieved per prompt, which increases the number of effective optimization steps when on-policy groups collapse to zero advantage. Here we equip RePO with the reward-oriented replay strategy introduced in its original paper, which achieves the best performance on Qwen3 in the experiments of its original paper. RePO shows certain stability and competitive results on a small budget of rollouts (i.e., best@1 and best@4) in some cases due to its hybrid update scheme similar to our PCPO. However, RePO only selects best-performing samples for off-policy samples, which easily gets stuck in local optima, underperforming on the best@16 metric. PCPO also outperforms its variant with random replay, where randomly selected trajectories are sent for off-policy updates. These results stress the importance of Global Population, which balances the quality and diversity of the off-policy samples, selecting the most promising candidates.

\subsubsection{Reducing token costs} \label{appendix-gp-token}
As shown in Figure~\ref{fig:token}, PCPO reduces inference-time token cost by internalizing grounded domain knowledge and distilling the detailed prompt (e.g., in-context exemplars that evolution-time methods repeatedly inject). After training, the policy can be decoded without lengthy thinking, so the model no longer spends many tokens on a long chain of hallucinated rationales that degrade the final program, saving lots of tokens on the thinking part. The prompt required is also much shorter: the task specification and in-context exemplars that evolution-time methods repeatedly inject have already been distilled into the weights, yielding an effect analogous to context distillation~\citep{context-distillation,learning-by-distilling-context}. Empirically, this shrinks the per-rollout budget from a growing prompt-plus-thinking trace to a compact instruction and a short code completion, cutting total tokens by about $4\times$ relative to OpenEvolve and ShinkaEvolve over the same 16 rollouts, while the generated programs demonstrate higher quality. Therefore, the cost of training is worthwhile, because after a single training run, PCPO can achieve outstanding performance in specific domains with minimal token consumption. This stems from PCPO internalizing domain‑specific knowledge into its own weights, rather than repeatedly injecting external contexts (e.g., in the form of skills) or conducting a long chain of thinking.

\section{GPU Kernel Design}\label{appendix-kd}
\subsection{Introduction to GPU Kernel Design}
GPU kernel design is a hardware-aware optimization process that maps high-level tensor computations onto the parallel execution and memory hierarchy of a target accelerator~\citep{kernelbench}. An effective kernel must identify computationally dominant operators and employ techniques such as operator fusion, tiling, coalesced memory access, and specialized instructions, including tensor-core operations, to reduce kernel-launch overhead, improve data locality, and increase hardware utilization~\citep{tillet2019triton, spector2025thunderkittens}. Because the optimal implementation depends strongly on tensor shapes, numerical precision, and device-specific characteristics, kernel design requires careful trade-offs among computational throughput, memory bandwidth, resource occupancy, and functional correctness. Moreover, compiler diagnostics, correctness tests, execution statistics, and profiling measurements provide essential feedback for iterative refinement, enabling developers to eliminate runtime failures and progressively improve performance. Consequently, GPU kernel design should be regarded not merely as low-level code generation, but as a joint optimization problem spanning algorithmic transformation, parallel program construction, and hardware-specific performance tuning. 

\subsection{Experiment Settings}
We compute the speedup metric with respect to the PyTorch Eager baseline on NVIDIA RTX 4090 GPUs. We follow KernelBench and define speedup as the ratio of PyTorch Eager wall-clock time to the generated kernel time, $\text{Speedup}=T_\text{eager}/T_\text{kernel}$. Correctness is checked against the reference Model on randomized inputs. Kernel runtime is measured online with the KernelBench timing protocol (CUDA events, warmup, repeated trials; we report the mean). For all methods, the budget for one case is 64 rollouts, including the RL methods, testing their inference-time learning ability. OpenEvolve, ShinkaEvolve, GRPO, and PCPO use the Qwen3-8B as the base model. Codex uses GPT-5.5 as the base model.

Note that in our setting, the Eager reference $T_\text{eager}$ is not re-timed at evaluation time. We profile each task once on the evaluation GPU under the same software stack, precision, and input shapes as the later kernel measurements, using the official Eager baseline procedure, and freeze the resulting per-task mean. All subsequent speedups use this snapshot. We freeze the baseline for two reasons. First, GPU wall-clock measurements fluctuate with thermal state, clocking, cache occupancy, and residual CUDA context. Since speedup is a ratio, independent noise in $T_\text{eager}$ is amplified and can change the real speedup of a kernel to a large extent. Second, live Eager timing is easily contaminated by the evaluation process itself (custom-kernel compilation, device context, and multi-job interference)~\citep{kernelbench-verified}. A shared, hardware-local baseline isolates differences in generated kernels, keeps comparisons across methods and seeds on a common scale, and matches KernelBench's official scoring path, which computes speedups against precomputed baseline time tables rather than a freshly timed Eager run.

As shown in Table~\ref{tab:kernelbench-benchmark}, we report the speedup on specific cases rather than the original $\mathrm{fast}_p$ metric used in KernelBench~\citep{kernelbench}. Reporting per-task speedup rather than the aggregate $\mathrm{fast}_p$ score is a deliberate evaluation choice. KernelBench's $\mathrm{fast}_p$ is a thresholded coverage metric: a task contributes $1$ if and only if the kernel is correct and
\begin{equation}
\mathrm{speedup} = \frac{T_{\mathrm{eager}}}{T_{\mathrm{kernel}}} > p.
\end{equation}
This binary reduction has consequences that are unhelpful for diagnosing optimization quality. A barely-above-threshold result (e.g., $1.01\times$) is counted identically to an extreme outlier (e.g., hundreds of $\times$). Prior works have shown that such results are often evaluation artifacts---asynchronous
stream timing~\citep{sun2026cuda}, identity shortcuts on a narrow input distribution, or a weak FP32 Eager denominator~\citep{kernelbench-verified}---rather than physically plausible kernel improvements. We therefore report the continuous speedup $T_{\mathrm{eager}}/T_{\mathrm{kernel}}$
on a handful of Level-1 cases highlighted in the original KernelBench Appendix~D (i.e., MinGPTNewGelu, SoftSign, MatmulWithDiagonalMatrices, and TripletMarginLoss), covering distinct computational structures: transcendental-heavy elementwise computation, bandwidth-dominated pointwise computation, structured broadcasting, and multi-input reduction. Combined with a frozen, hardware-local Eager baseline, this protocol is intended to measure whether a method actually improves these kernels, not whether it can accumulate $\mathrm{fast}_p$ credit by crossing a threshold on many tasks.

\section{Cross‑Model Experiment}\label{appendix-cross-model}
\begin{figure}[t]
  \centering
  \begin{subfigure}[b]{0.48\textwidth}
    \centering
    \includegraphics[width=0.8\textwidth]{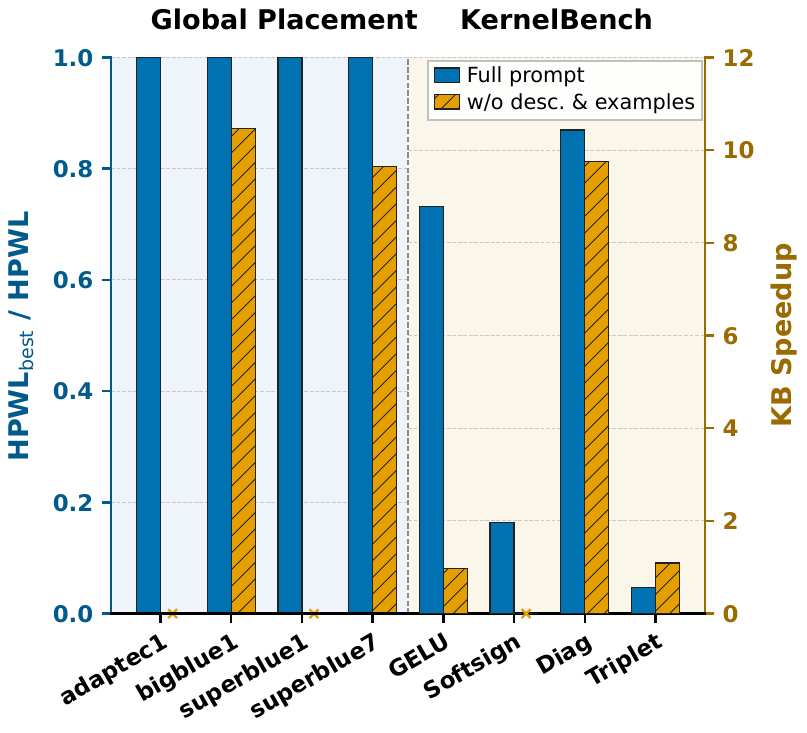}
    \caption{An illustration of relying on external domain knowledge rather than intrinsic knowledge.}
    \label{ood-1}
  \end{subfigure}
  \hfill
  \begin{subfigure}[b]{0.48\textwidth}
    \centering
    \includegraphics[width=0.8\textwidth]{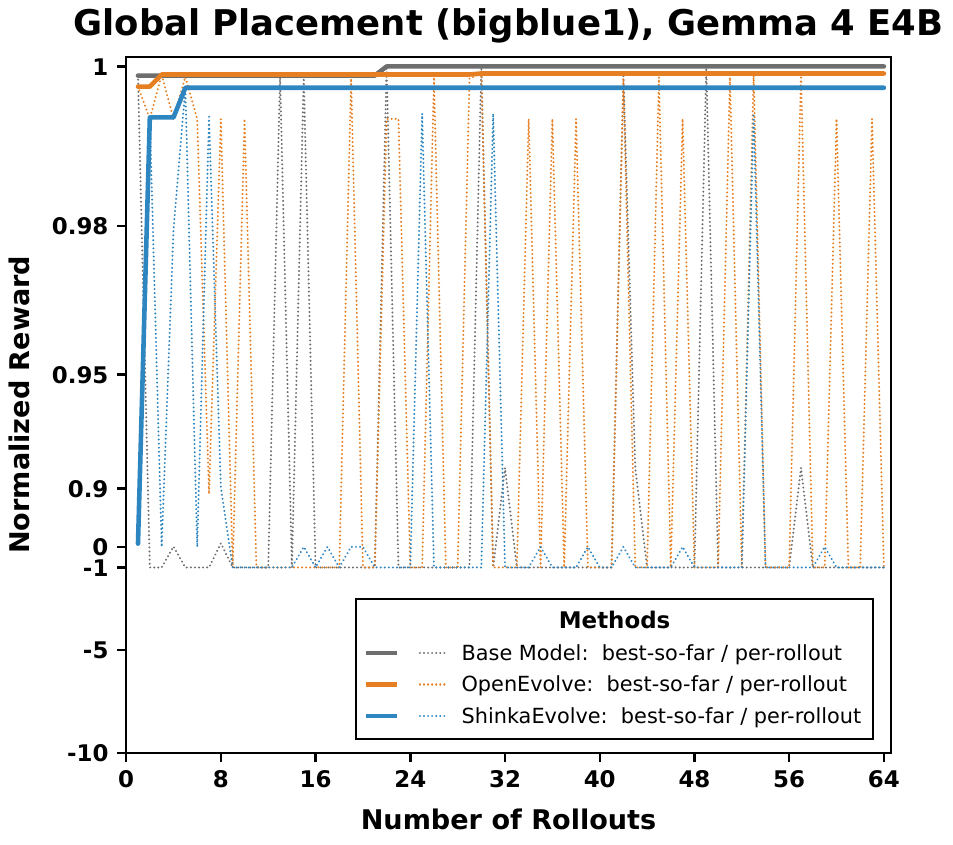}
    \caption{An illustration of in-context evolutionary stagnation of OpenEvolve and ShinkaEvolve.}
    \label{stagnation-1}
  \end{subfigure}
\caption{Diagnostic experiments based on Gemma4-E4B.}
\end{figure}

\begin{table*}[t]
  \centering
  \caption{Best@$n$ HPWL ($\times 10^6$) of Gemma~4 E4B-it on 16 global placement cases. Lower is better.}
  \label{tab:gemma4-e4b-globalplacement}
  \small
  \setlength{\tabcolsep}{3pt}
  \begin{adjustbox}{max width=\textwidth}
  \begin{tabular}{lcccccccccccc}
    \toprule
     & \multicolumn{3}{c}{Gemma~4-E4B} & \multicolumn{3}{c}{OpenEvolve} & \multicolumn{3}{c}{ShinkaEvolve} & \multicolumn{3}{c}{PCPO} \\
    \cmidrule(lr){2-4} \cmidrule(lr){5-7} \cmidrule(lr){8-10} \cmidrule(lr){11-13}
     & best@1 & best@4 & best@16 & best@1 & best@4 & best@16 & best@1 & best@4 & best@16 & best@1 & best@4 & best@16 \\
    \midrule
    \underline{adaptec1}   & $\times$ & $\times$ & $\times$ & $\times$ & 83.42 & 83.41 & $\times$ & \textbf{71.05} & 71.05 & $\times$ & 71.10 & \textbf{70.97} \\
    adaptec2               & \textbf{88.34} & 88.34 & 85.71 & $\times$ & $\times$ & 478.26 & $\times$ & $\times$ & $\times$ & $\times$ & \textbf{80.52} & \textbf{80.05} \\
    adaptec3               & $\times$ & 233.91 & 233.91 & 887.06 & 206.11 & 198.11 & $\times$ & $\times$ & $\times$ & \textbf{193.19} & \textbf{188.17} & \textbf{187.45} \\
    adaptec4               & $\times$ & $\times$ & \textbf{170.05} & $\times$ & 174.02 & 173.97 & \textbf{172.92} & 172.92 & 172.92 & 176.75 & \textbf{172.82} & 171.48 \\
    \underline{bigblue1}   & 88.68 & 88.68 & 88.68 & 89.66 & 88.59 & 88.59 & 272.00 & 92.53 & 89.77 & \textbf{88.00} & \textbf{87.89} & \textbf{87.63} \\
    bigblue2               & $\times$ & 133.15 & \textbf{132.39} & 133.00 & 133.00 & 132.98 & $\times$ & 132.92 & 132.92 & \textbf{132.79} & \textbf{132.65} & 132.65 \\
    bigblue3               & $\times$ & $\times$ & 320.29 & $\times$ & $\times$ & $\times$ & $\times$ & $\times$ & 981.92 & \textbf{313.78} & \textbf{313.78} & \textbf{313.78} \\
    bigblue4               & $\times$ & $\times$ & 757.63 & $\times$ & \textbf{757.54} & 757.48 & $\times$ & $\times$ & 757.77 & $\times$ & 758.72 & \textbf{731.61} \\
    \underline{superblue1} & $\times$ & \textbf{389.07} & \textbf{389.07} & 399.11 & 399.11 & 391.88 & $\times$ & 979.70 & 468.06 & \textbf{389.82} & 389.82 & 389.82 \\
    superblue3             & $\times$ & $\times$ & 571.76 & $\times$ & $\times$ & $\times$ & $\times$ & 475.34 & 458.20 & \textbf{677.41} & \textbf{465.86} & \textbf{446.17} \\
    superblue4             & \textbf{291.63} & \textbf{291.63} & \textbf{291.63} & 314.93 & 292.07 & 291.93 & 294.97 & 294.97 & 292.09 & 416.15 & 296.19 & 291.67 \\
    superblue5             & $\times$ & $\times$ & 456.26 & $\times$ & $\times$ & 489.50 & 459.12 & 454.87 & 454.87 & \textbf{454.78} & \textbf{454.78} & \textbf{451.74} \\
    \underline{superblue7} & $\times$ & \textbf{552.19} & \textbf{549.70} & $\times$ & 565.25 & 565.25 & \textbf{575.88} & 575.88 & 575.88 & 651.35 & 557.89 & 557.88 \\
    superblue10            & $\times$ & $\times$ & 861.83 & $\times$ & 963.61 & 876.90 & $\times$ & \textbf{864.01} & 864.01 & $\times$ & 889.10 & \textbf{861.22} \\
    superblue16            & $\times$ & $\times$ & 402.86 & \textbf{401.35} & \textbf{401.35} & \textbf{401.35} & $\times$ & $\times$ & $\times$ & 492.88 & 402.33 & 402.33 \\
    superblue18            & $\times$ & $\times$ & 228.14 & \textbf{223.84} & \textbf{223.84} & \textbf{223.83} & 839.26 & 223.88 & 223.88 & 225.25 & 225.25 & 224.38 \\
    \midrule
    Avg rank              & 2.96 & 3.00 & 2.31 & 2.58 & 2.53 & 2.94 & 2.77 & 2.78 & 3.25 & \textbf{1.69} & \textbf{1.69} & \textbf{1.50} \\
    \bottomrule
  \end{tabular}
  \end{adjustbox}
\end{table*}

To provide results that are consistent with the hypotheses in the main paper in a broader sense, we conduct experiments on different base models (i.e., Gemma4-E4B~\citep{team2026gemma}, apart from Qwen3-8B~\citep{yang2025qwen3}). Figure~\ref{ood-1} shows that Gemma4-E4B relies on external domain knowledge rather than intrinsic knowledge to improve its performance. Figure~\ref{stagnation-1} shows that in-context evolutionary stagnation also occurs with the base model Gemma4-E4B, which provides evidence that it is a common phenomenon.

Table~\ref{tab:gemma4-e4b-globalplacement} shows the result comparison of the base model Gemma4-E4B, two evolutionary methods, and PCPO based on Gemma4-E4B. The results align with Table~\ref{tab:globalplacement-benchmark} in the main paper with the base model Qwen3-8B: PCPO achieves the best average rank on all best@$n$ metrics, demonstrating its improvement within a small rollout budget and its stability on one-shot generations. In-context evolutionary methods still show worse average performance compared to the base model, which provides evidence that this is a common phenomenon, as analysed in Section~\ref{sec:in-context stagnation}.

\end{document}